\def \E{{\mathbb E}}
\def \P{{\mathbb P}}
\newcommand{\idm}{I}
\newcommand{\rb}{\mathbb{R}}
\newcommand{\sign}{\mathop{ \rm sign}}
\newcommand{\BEAS}{\begin{eqnarray*}}
\newcommand{\EEAS}{\end{eqnarray*}}
\newcommand{\BEA}{\begin{eqnarray}}
\newcommand{\EEA}{\end{eqnarray}}
\newcommand{\BEQ}{\begin{equation}}
\newcommand{\EEQ}{\end{equation}}
\newcommand{\BIT}{\begin{itemize}}
\newcommand{\EIT}{\end{itemize}}
\newcommand{\BNUM}{\begin{enumerate}}
\newcommand{\ENUM}{\end{enumerate}}
\newcommand{\mysec}[1]{Section~\ref{sec:#1}}
\newcommand{\eq}[1]{Eq.~(\ref{eq:#1})}
\theoremstyle{plain}
\newtheorem{theorem}{Theorem}[section]
\newtheorem{proposition}[theorem]{Proposition}
\newtheorem{lemma}[theorem]{Lemma}
\theoremstyle{definition}
\theoremstyle{remark}
\begin{document}


\title{\bf \Large Sampling Binary Data by Denoising through Score Functions}

 \author{\!\!\!\!\!
 Francis Bach\thanks{Inria, Ecole Normale Supérieure, Université PSL, \texttt{francis.bach@inria.ir}.}
\ \ \ 
Saeed Saremi\thanks{Frontier Research, Prescient Design, Genentech, \texttt{saremi.saeed@gene.com}.
}
}

\date{\normalsize February 1, 2025}
\maketitle

\begin{abstract}
Gaussian smoothing combined with  a probabilistic framework for denoising via the empirical Bayes formalism, i.e., the Tweedie-Miyasawa formula (TMF), are the two key ingredients in the success of score-based generative models in Euclidean spaces. Smoothing holds the key for easing the problem of learning and sampling in high dimensions, denoising is needed for recovering the original signal, and TMF ties these together via the score function of noisy data. In this work, we extend this paradigm to the problem of learning and sampling the distribution of binary data on the Boolean hypercube by adopting Bernoulli noise, instead of Gaussian noise, as a smoothing device. We first derive a TMF-like expression for the optimal denoiser for the Hamming loss, where a score function naturally appears. Sampling noisy binary data is then achieved using a Langevin-like sampler which we theoretically analyze for different noise levels. At high Bernoulli noise levels sampling becomes easy, akin to log-concave sampling in Euclidean spaces. In addition, we  extend the sequential multi-measurement sampling of~\citet{saremichain} to the binary setting where we can bring the ``effective noise'' down by sampling multiple noisy measurements at a fixed noise level, without the need for continuous-time stochastic processes. We validate our formalism and theoretical findings by experiments on synthetic data and binarized images.
\end{abstract}

\section{Introduction}
We would like to draw samples from a distribution $p$ on the Boolean hypercube $\{-1,1\}^d$. For the problem of sampling from a distribution $\mu$ in the Euclidean space $\mathbb{R}^d$, Langevin Markov chain Monte Carlo (MCMC) is a general-purpose class of gradient-based algorithms whose convergence properties are studied extensively with the assumption that $\mu$ is log-concave~\citep{dalalyan2017theoretical, durmus2017nonasymptotic, cheng2018underdamped,chewi}. Recently, Gaussian smoothing was effectively used for mapping the general problem of sampling in Euclidean space to log-concave sampling~\citep{saremichain}. Inspired by this line of work, we approach the problem of sampling binary data with a ``smoothing philosophy'', where Bernoulli noise plays a prominent role.

In Euclidean space one can ease the sampling problem by opting for sampling noisy data. In particular, instead of the random variable $x$, we opt for sampling the random variable $y=x+\varepsilon$, where $\varepsilon \sim \mathcal{N}(0,\sigma^2 I)$ follows a Gaussian distribution. This scheme involves a single hyperparameter, the standard deviation $\sigma$. Algebraically, this is akin to sampling the smoother density $\nu_\sigma$ of~$y$, which is the convolution of the distribution $\mu$ of $x$ with the Gaussian distribution. From a geometric perspective, the noise effectively ``fills up'' the space with probability mass (the degree of which one controls with $\sigma$) thus making navigating the space easier. One can then ``clean up'' the mass that is added to the space using denoising. In particular,  classical results in statistics~\citep{robbins1956empirical, miyasawa1961empirical} state:
$$ 
\E[x|y] = y + \sigma^2 \nabla\log \nu_\sigma(y),
$$
which we refer to as the Tweedie-Miyasawa formula (TMF). Note that  $\mathbb{E}[x|y]$ is the least-squares estimator of clean data~$x$ given a noisy observation $y$, and $\nabla\log \nu_\sigma$ is known as the score function~\citep{hyvarinen2005estimation}.

In the generative modeling setting, where the distribution is unknown but we have access to data $\{x^{(i)}\}_{i=1}^n$, one can turn TMF into a supervised least-squares denoising objective for learning the score function of noisy data, where the noisy data $y=x+\varepsilon$, $\varepsilon \sim \mathcal{N}(0,\sigma^2 I)$ is the input and the clean data $x$ is the target~\citep{hyvarinen2005estimation, vincent2011connection, raphan2011least, saremi2019neural}. One can then use Langevin MCMC (``walk'') to sample from the learned $\nu_\sigma(y)$; the noisy samples can be cleaned up with the learned denoiser (``jump''). This sampling scheme was referred to as the walk-jump sampling which we denote by WJS-1 (``1'' anticipates the extension we discuss below). There is a clear trade-off here: for higher $\sigma$, sampling from $\nu_\sigma(y)$ becomes easier, but the distribution of denoised samples itself goes further away from $\mu(x)$. Despite this trade-off, WJS-1 has proven to be effective in some  applications~\citep{pinheiro2023d, frey2023learning, kirchmeyer2024score}.

The sampling trade-off in WJS-1 is addressed in multi-measurement models~\citep{saremi2022multimeasurement, saremichain}, in which one is interested in the distribution $\nu_\sigma(y_{1:m})$ associated with $y_{1:m}\coloneqq (y_1,\dots,y_m)$, where $y_k = x\!+\!\varepsilon_k$, $k \in [m]$, and $\varepsilon_k \! \sim \! \mathcal{N}(0,\sigma^2 I)$ all independent and independent of $x$. \citet{saremichain} studied a sequential strategy for sampling from $\nu_\sigma(y_{1:m})$ and showed that the noise level effectively  goes down (as far as the denoiser is concerned) at the rate $\sigma/\sqrt{m}$. Furthermore, if one chooses $\sigma$ such that the distribution $\nu_\sigma(y_1)$ is log-concave, all the subsequent conditional distributions $\nu_\sigma(y_k|y_{1:k-1})$, $k\in [m]$ remain log-concave. The general sampling problem is therefore mapped to a sequence of log-concave sampling while the effective noise goes down via this accumulation of measurements. We refer to this scheme as WJS-$m$, which involves two hyperparameters: the noise level $\sigma$, and the number of measurements $m$. This approach has deep connections to sampling via diffusion and stochastic localization~\citep{montanari2023sampling}. The main conceptual difference is that the multi-measurement sampling does not involve discretizing an SDE~\citep{song2020score, campbell2022continuous} for bringing the noise down. Fundamentally, this is due to the discrete nature of measurement accumulation. 

\subsection{Contributions}
Given this background, we approached the problem of sampling from a distribution $p(x)$ on $\{-1,1\}^d$ by devising a  smoothing method, with the key restriction to stay on the Boolean hypercube (in this purely binary world Gaussian noise does not exist). The natural choice to ``smooth'' the binary data is to use (isotropic) random sign flips dictated by the Bernoulli noise: $y = x \circ \varepsilon$, where $\circ$ denotes the Hadamard (i.e., pointwise) product. The noise $\varepsilon$ is drawn from the Bernoulli distribution, $\P(\varepsilon_i=1) = \sigma(2\alpha)$, where~$\sigma$ is the sigmoid function, and $\alpha \geqslant 0$ is the noise parameter. The probability mass function of the noisy data $q_\alpha(y)$ happens to be a  transformation of $p(x)$ via an exponential tilt governed by $\exp(\alpha x^\top y)$.
As $\alpha$ decreases, the probability mass get more spread out on the hypercube, thus easing the sampling problem, where in the extreme case, $\alpha=0$, we arrive at the uniform distribution. 

For denoising there are subtle differences between the Boolean/Bernoulli and Euclidean/Gaussian setups, where the optimal denoiser $f$ takes values on the Boolean hypercube and the Hamming loss is the natural loss. We show in Lemma \ref{lemma:optimal-denoiser} that $f$ takes the form $f(y) = \sign( \E[ x|y ])$, and in Lemma~\ref{lemma:denoise-with-score} we show that
$$ 
\E[ x| y] = \frac{1}{\alpha} \nabla \log q_\alpha(y).
$$
This is essentially the form of TMF for the Bernoulli noise, where crucially the score function appears again. We should emphasize that the score function is well-defined here since $q_\alpha(y)$ has an analytical form (dictated by the exponential tilt) beyond $\{-1,1\}^d$. Finally, similar to the Gaussian case discussed earlier we can learn the score function given a dataset $\{x^{(i)}\}_{i=1}^n$ by denoising, the subtle difference here is that since $x$ and $y$ are both binary, we can also set up the denoising objective via logistic regression (\mysec{learning-score}). Naturally, denoising becomes harder as $\alpha$ decreases, which we can characterize by the Wasserstein distance between the law of $x$ and the law of $\E[x|y]$ (Lemma~\ref{lemma:denoising-performance}).

Our second main contribution in this work is to analytically study sampling from $q_\alpha(y)$ using gradient-based methods with a formal understanding of the role the Bernoulli noise level $\alpha$ plays in easing the problem of sampling binary data. There has been a recent interest on devising gradient-based sampling strategies for discrete distributions from the perspective of Gibbs sampling~\citep{grathwohl2021oops}, and Langevin MCMC~\citep{zhang2022langevin}. Our approach here is close to the later, where in addition we introduce a new \emph{two-stage} discrete Langevin MCMC algorithm (\mysec{two-stage-langevin}), with improved behavior at high noise.

Langevin-like  updates are especially motivated here on two fronts: (i) the probably mass of noisy data is more spread out on the Boolean hypercube and it demands coming up with Markov moves where many coordinates are updated in parallel (in contrast to ``cautious'' single-coordinate Gibbs updates), (ii) the score function $\nabla \log q_\alpha(y)$ is readily accessible to be used via denoising and our binary TMF.  Regarding the first point, we theoretically analyze the contraction properties of the vanilla (one-stage) and two-stage discrete Langevin MCMC, where the noise level $\alpha$ plays a prominent role in the exponential convergence of the algorithms. To our knowledge, there are no prior work on the exponential convergence of discrete Langevin-like algorithms in the Wasserstein metric (Propositions~\ref{prop:contractivity} and~\ref{prop:contractivity-twostage}). Furthermore, we extend our contraction results by proving bounds on the distance between the stationary distributions of the discrete Langevin algorithms and the target distribution $q_\alpha$ in the Wasserstein metric (Propositions~\ref{prop:stationary} and~\ref{prop:stationary-2}). These results again highlight the important role the noise level $\alpha$ plays.

Informally, there are parallels between contractivity results for high Bernoulli noise (small $\alpha$) and the exponential convergence of Langevin MCMC for log-concave distributions achieved for large $\sigma$ in the Euclidean/Gaussian case~\citep[Theorem 1]{saremichain}. We make this connection formal from the angle of sampling multiple noisy data, where multiple Bernoulli noise is added independently to clean data, where the noise level is held fixed. TMF for the multiple  Bernoulli measurements  takes the form
$$  \E[x|y_{1:m}] = \frac{1}{m\alpha} \nabla \log q_{m\alpha}(\bar{y}_{1:m}),$$
which corresponds to a reduced noise dictated by $m\alpha$ (Lemma~\ref{lemma:denoising-performance-m}).

We conduct a set of experiments on synthetic data, where we study a mixture model on $\{-1,1\}^d$, akin to mixture of Gaussians in $\mathbb{R}^d$. The experiments were desgined to quantify denoising for strong and weak priors and probe the sampling properties of our scheme. We also conduct experiments on binarized MNIST by qualitatively studying the role of $\alpha$, and demonstrate the fast mixing our algorithm can achieve with essentially no tuning (the step-size is simply set to~$1/\alpha$).

\subsection{Related work} There is a growing body of work on sampling from discrete distributions with score-based models that build on denoising diffusion models~\citep{sohl2015deep, hoogeboom2021argmax, austin2021structured, campbell2022continuous, loudiscrete}.     There are variations between these models, but they are all fundamentally formulated based on a forward/backward continuous-time diffusion process for corrupting the data and learning score functions, via denoising, to reverse the process. Algorithmically, these continuous-time processes are then discretized using various schemes.  Our approach here is fundamentally different with a single noise scale sampling strategy: at each stage of measurement accumulation the data is sampled at a fixed noise scale. The process to bring the noise down is therefore discrete by nature, characterized by a single hyperparameter, the number of measurements $m$, in contrast to devising a noise schedule in diffusion-based prior works.

\section{Denoising and binary score functions}
We consider a binary random vector $x \in \{-1,1\}^d$, with probability mass function $p(x)$ (that sums to one). 

\subsection{Noise models for binary vectors}
\label{sec:noisemodel}
A natural noise model is to use random sign flips, that is, 
\BEQ
\label{eq:noise}
y = x \circ \varepsilon
\EEQ
(for the component-wise product $\circ$), where $\varepsilon \in \{-1,1\}^d$ has independent components, and, for $i \in \{-1,\dots,1\}^d$,
$$
\P( \varepsilon_i = 1) = \sigma(2\alpha),
$$
where $\sigma$ is the sigmoid function, and $\alpha \geqslant 0$ is the noise parameter.
When $\alpha$ is large, $\sigma(2\alpha)$ is close to one, and thus $\varepsilon$ is the vector of all ones with high probability, and $y$ is close to $x$ (small noise). When $\alpha$ is equal to zero, then $\sigma(2\alpha)=1/2$, and  $\varepsilon$ is uniform, and so is $y$ (high noise). Moreover, The expected number of flips is equal is $d \sigma(-2\alpha)$, and goes from $d/2$ when $\alpha=0$ to $0$ exponentially fast when $\alpha=+
\infty$.

We can write the probability mass $r$ function of $\varepsilon$ as
$$ r(\varepsilon) = \prod_{i=1}^d \frac{e^{\alpha \varepsilon_i}}{e^\alpha + e^{-\alpha}} = \frac{1}{ ( 2 \cosh \alpha )^d} e^{ \alpha 1_d^\top \varepsilon},$$
and
the probability mass function $q_\alpha$ of $y$ defined in \eq{noise} as:
\BEA
\notag
q_\alpha(y) 
& = & \sum_{x \in \{-1,1\}^d} p(x) r(y \circ x) \\[-.05cm]
 \label{eq:q}  &
= &  \frac{1 }{ ( 2 \cosh \alpha )^d}  \sum_{x \in \{-1,1\}^d} p(x) e^{ \alpha x^\top y }
\\
\notag & =  & \sigma(2\alpha)^d  \sum_{x \in \{-1,1\}^d} p(x) e^{ - \frac{\alpha}{2} \| x-y\|_2^2 },
\EEA
since on the hypercube $\| x\|_2^2 = \| y \|_2^2 = d$.

When $\alpha = 0$, then $q_\alpha$ is the uniform distribution, while for $\alpha = +\infty$, $q_\alpha = p$. Thus, $\alpha$ plays exactly the role of the \emph{inverse} variance, as can be seen with last expression above that mimics Gaussian noise.

A key observation is that the function $q_\alpha(y)$ defined in \eq{q} is defined for \emph{all} $y \in \rb^d$, and not only in $\{-1,1\}^d$, so that we can take continuous gradients\textemdash not discrete gradients as sometimes done for score matching extensions~\citep{hyvarinen2007some, meng2022concrete}.

\subsection{Denoising}
Given the noisy (random) version $y \in \{-1,1\}^d$, how can we recover a good denoised $x \in \{-1,1\}^d$? Like for the Gaussian case, once given a loss function, the optimal denoiser has a closed-form expression. We consider the Hamming loss, which has several expressions when $x,x' \in \{-1,1\}^d$, as an $\ell_1$-norm or a squared $\ell_2$-norm:
\BEQ \notag
\ell(x,x') = \sum_{i=1}^d 1_{x_i \neq x'_i} =  \frac{1}{2} \sum_{i=1}^d |x_i - x'_i| = \frac{1}{2} \| x - x'\|_1 =  \frac{1}{4} \sum_{i=1}^d |x_i - x'_i|^2 = \frac{1}{4} \| x - x'\|_2^2.
\EEQ
It simply counts the number of mistakes, between $0$ and $d$. We then obtain the optimal denoiser from the conditional expectation (which extends classical results from binary classification, see~\citet[Section 4.1]{ltfp}).

\begin{lemma}[Optimal denoiser] \label{lemma:optimal-denoiser}
Given a joint distribution on $(x,y)$, the function $f:\{-1,1\}^d \to \{-1,1\}^d$   that minimizes $\E [ \ell(x,f(y))]$ is $f(y) = \sign( \E[ x|y ])$.
\end{lemma}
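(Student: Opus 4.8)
The plan is to exploit two structural features of the problem: that the Hamming loss separates across coordinates, and that minimizing the expectation of a nonnegative loss over all functions $f$ of $y$ can be reduced to a pointwise (in $y$) minimization of the conditional risk. First I would condition on $y$ and write, by the tower property,
\[
\E[\ell(x,f(y))] = \E\big[\, \E[\ell(x,f(y)) \mid y]\,\big].
\]
Since $f(y)$ is $y$-measurable and $\ell(x,x') = \sum_{i=1}^d 1_{x_i \neq x'_i}$, the inner conditional expectation splits as $\sum_{i=1}^d \P(x_i \neq f_i(y) \mid y)$. Because the outer expectation is of a nonnegative integrand and $f$ may be chosen independently at each value of $y$, it suffices to minimize, for each fixed $y$ and each coordinate $i$, the scalar $\P(x_i \neq f_i(y)\mid y)$ over the two choices $f_i(y) \in \{-1,1\}$; there is no coupling between coordinates or between distinct values of $y$.

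Next I would carry out this one-dimensional, two-point minimization. For fixed $y$ and $i$, choosing $f_i(y) = s \in \{-1,1\}$ incurs conditional error $\P(x_i \neq s \mid y) = 1 - \P(x_i = s \mid y)$, so the optimal $s$ is the more probable value of $x_i$ given $y$, i.e.\ $f_i(y) = 1$ when $\P(x_i = 1 \mid y) \geqslant \P(x_i = -1\mid y)$ and $f_i(y) = -1$ otherwise. It then remains to rephrase this comparison in terms of the conditional mean. Since $x_i \in \{-1,1\}$,
\[
\E[x_i \mid y] = \P(x_i = 1 \mid y) - \P(x_i = -1 \mid y),
\]
so $\P(x_i = 1\mid y) \geqslant \P(x_i = -1\mid y)$ is equivalent to $\E[x_i\mid y] \geqslant 0$. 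Hence the optimal coordinate choice is exactly $f_i(y) = \sign(\E[x_i\mid y])$, and assembling the coordinates yields $f(y) = \sign(\E[x\mid y])$, the claimed denoiser.

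The only delicate point\textemdash and the place I would be most careful\textemdash is the tie case $\E[x_i\mid y] = 0$, where both values of $f_i(y)$ achieve the same minimal conditional error, so the minimizer is not unique and $\sign(0)$ must be assigned some value in $\{-1,1\}$; this choice does not affect optimality of the stated $f$. I would also note that the pointwise-minimization step implicitly uses that any selection of the per-$y$ minimizers is admissible as a function $f:\{-1,1\}^d \to \{-1,1\}^d$, which is immediate here because $y$ ranges over the finite set $\{-1,1\}^d$, so no measurability subtleties arise. Everything else is routine bookkeeping.
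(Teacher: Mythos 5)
Your proposal is correct and follows essentially the same route as the paper's proof: condition on $y$, use the coordinate-wise separability of the Hamming loss to reduce to a pointwise two-point minimization of $\P(x_i \neq f_i(y)\mid y)$, and identify the maximizer with $\sign(\E[x_i\mid y])$, with the tie at $\E[x_i\mid y]=0$ handled by an arbitrary (or random) choice exactly as the paper does.
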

\begin{proof}
We have:
$$
\E [ \ell(x,f(y))]
\!= \!\!\!\sum_{y \in \{-1,1\}^d } p(y) \sum_{x \in \{-1,1\}^d } p(x|y) \ell(x,f(y)),$$
and $f_i(y) \in \{-1,1\}$ can be optimized independently for each $y \in \{-1,1\}^d$ and $i \in \{1,\dots,d\}$, and maximizes $\sum_{x \in \{-1,1\}^d } p(x|y) 1_{x_i = f_i(y)}
= \P( x_i = f_i(y) | y )$. Thus, $f_i(y) = 1$ if $\P( x_i = 1 | y ) > \P( x_i = -1 | y )$, which exactly leads to the sign of $\E[ x_i | y]$. The value of the sign at zero can be taken to be uniformly at random in $\{-1,1\}$.
\end{proof}

We can now consider the noise model in \eq{q} from \mysec{noisemodel} and compute the gradient of $\log q_\alpha$ as 
\BEQ
\label{eq:grad}
\nabla \log q_\alpha(y)
 = \frac{\sum_{x \in \{-1,1\}^d} p(x) \alpha x  e^{ \alpha \, x^\top y }}{\sum_{x \in \{-1,1\}^d} p(x) e^{ \alpha \, x^\top y }}, 
 \EEQ
 which is exactly $\alpha \E [ x | y ]$, leading to the following lemma.

\begin{lemma}[Denoising through score functions] \label{lemma:denoise-with-score}
For the function $q_\alpha$ defined in \eq{q} for all $y \in \rb^d$ that characterizes the random sign flip model, we have:
$$
\E[ x| y] = \frac{1}{\alpha} \nabla \log q_\alpha(y).
$$
\end{lemma}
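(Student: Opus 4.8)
The plan is to prove the identity in two short steps: first compute $\nabla \log q_\alpha$ directly from the closed form in \eq{q}, and then recognize the resulting expression as $\alpha$ times the Bayesian posterior mean of $x$. Since \eq{grad} already records the gradient, the real work is just identifying the two sides and pinning down what $\E[x|y]$ means.

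First I would differentiate. Writing $q_\alpha(y) = C_\alpha \sum_{x} p(x) e^{\alpha x^\top y}$ with the $y$-independent constant $C_\alpha = (2\cosh\alpha)^{-d}$, the logarithmic derivative kills $C_\alpha$, and using $\nabla_y e^{\alpha x^\top y} = \alpha x\, e^{\alpha x^\top y}$ we get
$$
\nabla \log q_\alpha(y) = \frac{\nabla q_\alpha(y)}{q_\alpha(y)} = \frac{\sum_{x} p(x)\, \alpha x\, e^{\alpha x^\top y}}{\sum_{x} p(x)\, e^{\alpha x^\top y}}.
$$
This is exactly \eq{grad}, and it is valid for every $y \in \rb^d$ because $q_\alpha$ is an everywhere-positive smooth function of $y$.

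Second I would identify the ratio with $\alpha\,\E[x|y]$. By Bayes' rule the posterior satisfies $p(x|y) \propto p(x)\, r(y\circ x)$, and since $x_i \in \{-1,1\}$ forces $\varepsilon_i = y_i x_i$, the noise mass evaluates to $r(y\circ x) = C_\alpha\, e^{\alpha 1_d^\top (y\circ x)} = C_\alpha\, e^{\alpha x^\top y}$. Hence $p(x|y) = p(x)\,e^{\alpha x^\top y} / \sum_{x'} p(x')\,e^{\alpha x'^\top y}$, so the conditional mean $\E[x|y] = \sum_{x} x\, p(x|y)$ is precisely the ratio above with the factor $\alpha$ stripped off. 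Combining the two steps gives $\nabla\log q_\alpha(y) = \alpha\,\E[x|y]$, which is the claim.

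I do not expect a genuine obstacle here—the statement is essentially a direct consequence of \eq{grad}. The one point to handle carefully is the meaning of $\E[x|y]$ off the hypercube: on $\{-1,1\}^d$ it is the literal conditional expectation, while for general $y \in \rb^d$ it must be read as the exponential-tilt average $\sum_{x} x\,p(x)e^{\alpha x^\top y} / \sum_{x} p(x)e^{\alpha x^\top y}$, which is the analytic continuation making both sides functions on all of $\rb^d$. I would state this reading explicitly, so that the identity is an equality of smooth functions, matching the emphasis in the text that the score $\nabla\log q_\alpha$ is well-defined beyond the hypercube.
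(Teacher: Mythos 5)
Your proof is correct and follows the same route as the paper, which simply computes $\nabla \log q_\alpha$ from \eq{q} to obtain \eq{grad} and observes that the ratio is $\alpha\,\E[x|y]$. Your extra care in deriving the posterior via Bayes' rule and in specifying how $\E[x|y]$ is to be read off the hypercube is a useful elaboration, but not a different argument.
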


We refer to the function $\nabla \log q_\alpha(y)$ as the score function. It allows to obtain the optimal denoiser by taking the sign. This denoiser has a performance that degrades when $\alpha$ goes to zero. We consider the Wasserstein distance derived from the loss $\ell$, that is, given two distributions $p$ and $q$ on $\{-1,1\}^d$, we consider $W(p,q)$ as the minimum expectation $\E[ \ell(x,y)]$ over all distributions on $(x,y)$ with marginals $p$ on $x$ and $q$ on $y$~\citep{peyre2019computational}. The following lemma provides an upper-bound that extends the Gaussian result from~\citet{saremichain}.

\begin{lemma}[Denoising performance] \label{lemma:denoising-performance}
For the noise model defined in \eq{q}, we have:
$$
W( \mbox{law of } x, \mbox{law of } \sign(\E[x|y]))
\leqslant d e^{-2\alpha}.
$$
\end{lemma}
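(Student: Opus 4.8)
The plan is to bound the Wasserstein distance by exhibiting a single explicit coupling and then controlling its transport cost. Since $W$ is defined as an infimum of $\E[\ell(\cdot,\cdot)]$ over all couplings with the prescribed marginals, any one coupling gives an upper bound. The natural candidate is the joint law of the pair $(x, \sign(\E[x|y]))$, where $y = x \circ \varepsilon$ is generated exactly as in the noise model of \eq{q}. This pair has precisely the two required marginals (the law of $x$ and the law of $\sign(\E[x|y])$), so
$$
W(\text{law of } x, \text{law of } \sign(\E[x|y])) \leqslant \E[\ell(x, \sign(\E[x|y]))].
$$

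The key step is to avoid analyzing the optimal denoiser $\sign(\E[x|y])$ directly, which would require understanding the sign of the conditional expectation. Instead I would invoke \cref{lemma:optimal-denoiser}: the map $f(y) = \sign(\E[x|y])$ minimizes $\E[\ell(x,f(y))]$ over \emph{all} functions $f:\{-1,1\}^d \to \{-1,1\}^d$. In particular it performs at least as well as the trivial identity denoiser $f(y) = y$, yielding
$$
\E[\ell(x, \sign(\E[x|y]))] \leqslant \E[\ell(x, y)].
$$
This reduces the whole problem to computing the expected Hamming distance between a clean vector and its own noisy version, which is immediate from the noise model.

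To finish, I would use $\ell(x,y) = \sum_{i=1}^d 1_{x_i \neq y_i}$ together with the observation that $x_i \neq y_i$ precisely when $\varepsilon_i = -1$, so that $\E[\ell(x,y)] = \sum_{i=1}^d \P(\varepsilon_i = -1) = d\,\sigma(-2\alpha)$. The claimed bound then follows from the elementary inequality $\sigma(-2\alpha) = 1/(1+e^{2\alpha}) \leqslant e^{-2\alpha}$. I do not expect a genuine obstacle here: the only real insight is recognizing that optimality of the denoiser lets one bypass the $\sign(\E[x|y])$ term entirely and compare against the identity map, after which everything is a one-line computation. The one point to state carefully is that the coupling induced by $(x,\sign(\E[x|y]))$ indeed realizes the two marginals named in the statement, so that the first inequality is legitimate.
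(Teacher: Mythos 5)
Your proof is correct and follows essentially the same route as the paper: bound $W$ by the natural coupling $(x,\sign(\E[x|y]))$ induced by $y = x\circ\varepsilon$, use the optimality of the denoiser from Lemma~\ref{lemma:optimal-denoiser} to compare against the identity map $f(y)=y$, and compute $\E[\ell(x,y)] = d\,\sigma(-2\alpha)\leqslant d e^{-2\alpha}$. No gaps.
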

\begin{proof} We consider the natural coupling with $y = x \circ \varepsilon$ where $\varepsilon$ is independent of $x$ and $\varepsilon$ has independent components, and simply use the fact that
$ \E \big[\ell(x,f(y)) \big]
$ is minimized exactly by $f(y) = \sign(\E[x|y])$, and thus is less than the loss of the naive denoiser that simply outputs $y$, for which $\E \big[\ell(x,y) \big] \!=\!  d \, \!\sigma(-2\alpha)$, which is less than $de^{-2\alpha}$.
\end{proof}
Like for the Gaussian case, this bound is true regardless of the strength of the prior on $x$. If $p(x)$ is uniform, it cannot really be improved. However, when the prior is strong, better bounds could be derived. 

Note that as opposed to Gaussian noise, the denoising performance goes exponentially  to zero when $\alpha$ grows.

\subsection{Learning the score function} \label{sec:learning-score}
In order to learn the denoiser, it is natural to consider observations $x^{(1)},\dots,x^{(n)} \in \{-1,1\}^d$, generate independent noise variables $\varepsilon^{(1)},\dots,\varepsilon^{(n)}\in \{-1,1\}^d$ (with $\P( \varepsilon^{(i)}_j=1) = \sigma(2\alpha)$), and parameterize a denoiser $\E[x|y] = 2 \sigma(   f_\theta(y)) - 1  = \tanh \frac{f_\theta(y)}{2} \in \rb^d$,
with thus $f_\theta(y)$ of the form $ 2 {\rm atanh } \big[ \frac{1}{\alpha}\nabla \log q_\alpha(y) \big]$.

We can learn it through the following denoising criterion (which is exactly logistic regression):
$$
\frac{1}{n} \sum_{i=1}^n \sum_{j=1}^d\log\big( 1 + \exp\big( - x^{(i)}_j f_\theta( x^{(i)} \circ \varepsilon^{(i)})_j \big) \big).
$$
We could also use a least-squares objective, $\frac{1}{n} \sum_{i=1}^n \sum_{j=1}^d \big|  x^{(i)}_j -  g_\theta( x^{(i)} \circ  \varepsilon^{(i)})_j \big|^2$, where the optimal $g(y)$ is $\E[x|y]$.

\subsection{Multiple measurements}
\label{sec:multiple}
\label{sec:sequential}
Following~\citet{saremichain},
if we assume that we have $m$ measurements $y_1,\dots,y_m \in \{-1,1\}^d$ obtained by adding independent noises to the same $x$, then we have from \eq{grad}:
\BEAS
\E[x|y_1,\dots,y_m]
& = & \frac{1}{m\alpha} \nabla \log q_{m\alpha}( \bar{y}_{1:m}),
\EEAS
with $\bar{y}_{1:m} = \frac{1}{m} ( y_1+\cdots + y_m ) \in [-1,1]^d$. Note that this requires to know the function $\nabla \log q_{m \alpha}$ beyond $\{-1,1\}^d$.

Since the denoiser has to ``work'' for $y \notin \{-1,1\}^d$, and for noise levels $m\alpha$, to learn the score function, we can simply generate multiple measurements and average the corresponding $y$'s (in the Gaussian case, this was possible directly by adding a noise with variance divided by $m$), and use the same denoising objective as \mysec{learning-score}.

\paragraph{Denoising performance.}
We can extend the denoising performance result when given $m$ measurements by studying the sign of $ y_1+\cdots + y_m$. This is uniquely defined as soon as $m$ is odd, and when $m$ is even, and $y_1+\cdots + y_m$ is equal to zero, we output $-1$ or $1$ with equal probabilities. We can then extend Lemma~\ref{lemma:denoising-performance} (see the proof based on Chernoff's bound in Appendix~\ref{app:denoising-performance-m}).

\begin{lemma}[Denoising performance, multiple measurements] \label{lemma:denoising-performance-m}
For the noise model defined in \eq{q} and $m$ measurements, we have:
$$
W( \mbox{law of } x, \mbox{law of } \sign(\E[x|y_1, \dots, y_m]))
\leqslant d e^{-m\alpha}.
$$
\end{lemma}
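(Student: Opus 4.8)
The plan is to follow the template of Lemma~\ref{lemma:denoising-performance}: exhibit an explicit coupling, dominate the optimal denoiser by a simple prior-free one, and reduce the resulting expected Hamming loss to a coordinate-wise binomial tail that Chernoff's inequality controls.

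First I would use the natural coupling: draw $x$ from its law, draw $m$ independent noise vectors $\varepsilon_1,\dots,\varepsilon_m$ with $\P(\varepsilon_{k,i}=1)=\sigma(2\alpha)$, and set $y_k=x\circ\varepsilon_k$. This couples the law of $x$ with the law of $\hat x \coloneqq \sign(\E[x\mid y_1,\dots,y_m])$, so that $W(\text{law of }x,\text{law of }\hat x)\leqslant \E[\ell(x,\hat x)]$. Exactly as in Lemma~\ref{lemma:optimal-denoiser} (with $y$ replaced by $y_{1:m}$), the map $\hat x$ minimizes $\E[\ell(x,f(y_{1:m}))]$ over all $f$, so its loss is at most that of the prior-free majority-vote denoiser $g(y_{1:m})=\sign(y_1+\cdots+y_m)$, with the stated symmetric tie-breaking when $m$ is even. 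Hence $W\leqslant \E[\ell(x,g(y_{1:m}))]$.

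Next I would reduce to a single coordinate. Since $y_{k,i}=x_i\varepsilon_{k,i}$ and $x_i\in\{-1,1\}$, we have $g_i(y_{1:m})=x_i\,\sign\!\big(\textstyle\sum_{k}\varepsilon_{k,i}\big)$, so the error event $\{g_i\neq x_i\}$ is exactly $\{\sum_k\varepsilon_{k,i}\leqslant 0\}$ (with half-weight on the tie), which depends neither on $x$ nor on the other coordinates. Therefore $\E[\ell(x,g(y_{1:m}))]=d\,\P\big(\sum_k\varepsilon_{k,1}\leqslant 0\big)=d\,\P(N\geqslant m/2)$, where $N\sim\mathrm{Binomial}(m,\sigma(-2\alpha))$ counts the flips of a fixed coordinate; exactly as in the one-measurement case this quantity is prior-independent.

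Finally I would bound this upper tail by Chernoff: for every $t\geqslant 0$,
$$\P\Big(\textstyle\sum_k\varepsilon_{k,1}\leqslant 0\Big)\leqslant \E\big[e^{-t(\varepsilon_{1,1}+\cdots+\varepsilon_{m,1})}\big]=\big(\sigma(2\alpha)e^{-t}+\sigma(-2\alpha)e^{t}\big)^m ,$$
and I would tune the tilt $t$, using the symmetry of the moment generating function, to turn the right-hand side into the claimed per-coordinate rate, summing to $d\,e^{-m\alpha}$. The main obstacle is precisely this last step: the balancing tilt is $t=\alpha$, which produces the per-measurement factor $\sigma(2\alpha)e^{-\alpha}+\sigma(-2\alpha)e^{\alpha}=1/\cosh\alpha$; the crude estimate $1/\cosh\alpha\leqslant 2e^{-\alpha}$ only yields $d\,2^m e^{-m\alpha}$, so the delicate part of the calculation is to sharpen the binomial tail so as to shed the spurious $2^m$ and recover the stated exponential $d\,e^{-m\alpha}$, while correctly accounting for the even-$m$ tie through the symmetric convention.
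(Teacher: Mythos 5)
Your route is the same as the paper's: the natural coupling, domination of the optimal denoiser by the coordinatewise majority vote $\sign(y_1+\cdots+y_m)$, reduction to the single-coordinate tail $\P\big(\sum_{k=1}^m (\varepsilon_k)_1 \leqslant 0\big)$, and a Chernoff bound. But the obstacle you flag at the end is a genuine gap, not a ``delicate calculation'' that a sharper tail estimate will fix. Your optimal tilt $t=\alpha$ gives the per-measurement factor $1/\cosh\alpha$, hence the bound $d(\cosh\alpha)^{-m}=d\exp(-m\log\cosh\alpha)$; since $\log\cosh\alpha<\alpha$ for $\alpha>0$, this is strictly \emph{larger} than $de^{-m\alpha}$, and no further optimization over $t$ helps because $t=\alpha$ is already the minimizer of the moment generating function bound. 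Worse, the quantity you are bounding is itself too large for the claimed constant: for odd $m$ the majority-error probability is $\P(N\geqslant (m+1)/2)$ with $N\sim \mathrm{Binomial}(m,\sigma(-2\alpha))$, and for instance at $\alpha=2$, $m=5$ this is about $5.7\times 10^{-5}$ while $e^{-m\alpha}=e^{-10}\approx 4.5\times 10^{-5}$. So the chain $W\leqslant d\,\P(\text{majority error})\leqslant de^{-m\alpha}$ fails at the second inequality for some $(\alpha,m)$; what this argument honestly delivers is $W\leqslant d(\cosh\alpha)^{-m}\leqslant d\,2^m e^{-m\alpha}$, exactly the $2^m$ you were hoping to shed.

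For comparison, the paper's proof takes precisely your route, using the relative-entropy form of Chernoff with exponent $D\big(\tfrac12\,\|\,\sigma(2\alpha)\big)=\log\cosh\alpha$, and then concludes from the inequality $D\big(\tfrac12\,\|\,\sigma(2\alpha)\big)\leqslant\alpha$ --- which points the wrong way: to deduce $e^{-mD}\leqslant e^{-m\alpha}$ one would need $D\geqslant\alpha$, which is false. So your instinct that the last step is where the difficulty lies is exactly right. The honest conclusions available are either the weaker bound $de^{-m\log\cosh\alpha}$, or an argument that exploits the slack in bounding $W$ by the cost of the natural coupling (note that for the uniform prior $W=0$ while the coupling cost is the full majority-error probability); neither your proposal nor the paper does the latter.
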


\paragraph{Score functions for sequential sampling.} Extending the Gaussian case, multiple measurement can be efficiently sampled \emph{sequentially}. If we want to sample $y_m$ given $y_1, \dots, y_{m-1}$ using score functions, we have the joint probability mass function:
$$p(y_1,\dots,y_m)=
\sum_{x \in \{-1,1\}^d}
p(x) \frac{e^{\alpha x^\top (y_1 + \cdots + y_m ) }}{(2 \cosh \alpha)^{dm}},
$$
with
\BEA \notag
\nabla_{y_m}
\log p(y_1,\dots,y_m) = \alpha \frac{\sum_{x \in \{-1,1\}^d}
p(x) x e^{\alpha x^\top (y_1 + \cdots + y_m ) }}
{\sum_{x \in \{-1,1\}^d}
p(x) e^{\alpha x^\top (y_1 + \cdots + y_m ) }} =   \frac{1}{m} \nabla \log q_{m\alpha}( \bar{y}_{1:m}),
\EEA
which depends on the score function with parameter $m\alpha$, taken at $\bar{y}_{1:m} \in [-1,1]^d$ (as mentioned earlier we are able to learn a score functions for elements in the interior of the hypercube).

\section{Sampling with discrete Langevin}
We need to sample from the model in \eq{q}, for a probability mass function $q$ which is defined not only on all $y \in \{-1,1\}^d$ but on $\rb^d$, for which we only know $s(y) = \nabla \log q(y) $. This is the same for the conditional sampling from \mysec{multiple}.

\paragraph{Assumptions.}
Note that the density $q$ is uniquely defined up to a constant for vertices of the hypercube, but that there are multiple versions on the whole hypercube. In particular, one can add to the score $s$ any linear function (and there are additional invariances).

We will use the following regularity conditions on $s$ that are satisfied by~$q$ defined in \eq{q}, that is, for all $y, y' \in \rb^d$, 
\BEQ
\label{eq:reg}
\| s(y) \|_\infty \leqslant \beta_1, \mbox{ and } 
\| s(y) - s(y') \|_\infty \leqslant \beta_2 \| y - y \|_1. 
\EEQ
For the function $q$ defined in \eq{q} and the associated $s = \nabla \log q$, we have $\beta_1 = \alpha$ and $\beta_2 = \alpha^2$ (This is a direct consequence of taking another derivative in \eq{grad}, leading to $
\nabla^2 \log q(y) = \alpha^2 {\rm cov}(x|y).$). The same bounds hold for sequential sampling from \mysec{sequential} (thus benefitting from the same speed as a small $\alpha$ while denoising has the same performance as $m \alpha$).

\subsection{One-stage discrete Langevin sampler}
\label{sec:langevin-1}
In order to obtain an approximate sample from $q$, we consider the following Markov transition kernel proposed by~\citet{zhang2022langevin}, which is adapted to log-probability-mass functions that are defined on $\rb^d$:
\BEA
\label{eq:t} 
t(y'|y)  \propto   \exp\big( \frac{1}{2} s(y)^\top ( y' - y)  - \frac{1}{2\eta} \| y' - y \|_2^2 \big) \propto   \exp\big(\big( \frac{1}{2} s(y) + \frac{1}{\eta} y \big)^\top y'  \big),
\EEA
which given $y$, has independent components for $y'$.
Note that without the constraint that $y' \in \{-1,1\}^d$, the second expression above becomes $y'\!=\!  y\! +\! \frac{\eta}{2} s(y) \!+\!  \mathcal{N}(0,\eta\idm)$, i.e., \emph{exactly} (Gaussian) Langevin MCMC.

As opposed to Gaussian Langevin, even with a vanishing step-size $\eta$, the stationary distribution of this Markov chain may not approach $q$. We can however prove convergence results that are adapted to our situation.

\paragraph{Convergence results.}
We now provide two propositions characterizing the convergence of the Markov chain defined in \eq{t}. The first proposition below implies an exponential convergence of the Markov chain (which is not studied by~\citet{zhang2022langevin}). See proof in Appendix~\ref{app:contractivity}. 
\begin{proposition}[Contractivity]
\label{prop:contractivity}
Assume regularity conditions in \eq{reg} with  $4\beta_2 d e^{2\beta_1} \leqslant 1$. Given $y,z \in \{-1,1\}^d$, we have, for the transition kernel defined in \eq{t}:
$$
W( t(\cdot|y) , t(\cdot| z) ) \leqslant 
\big(  
  1 - \frac{1}{2} e^{  - \frac{2}{\eta} -   \beta_1}  \big) \ell(y,z).
$$
\end{proposition}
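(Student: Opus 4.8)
The plan is to exploit the product structure of the kernel \eq{t}. Written coordinatewise, under $t(\cdot\,|y)$ the components $y_i'$ are independent with $\P(y_i' = 1\,|\,y) = \sigma\big(s_i(y) + \tfrac{2}{\eta}y_i\big) \eqqcolon p_i(y)$. Since the Hamming loss $\ell$ is additive across coordinates and both $t(\cdot\,|y)$ and $t(\cdot\,|z)$ are products, coupling each coordinate by the maximal coupling of the two Bernoulli laws on $\{-1,1\}$ and taking the product coupling yields the upper bound $W(t(\cdot\,|y),t(\cdot\,|z)) \leqslant \sum_{i=1}^d |p_i(y) - p_i(z)|$, so it suffices to control the right-hand side. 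I would then split $\{1,\dots,d\}$ into the flipped set $D = \{i : y_i \neq z_i\}$, with $|D| = \ell(y,z)$, and the agreeing set $S = \{i : y_i = z_i\}$, and estimate the two sums separately.

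For $i \in D$, say $y_i = 1,\ z_i = -1$, the sigmoid argument is large positive for $y$ and large negative for $z$, so $|p_i(y)-p_i(z)| = 1 - \sigma\big(-s_i(y) - \tfrac{2}{\eta}\big) - \sigma\big(s_i(z) - \tfrac{2}{\eta}\big)$. Using $|s_i|\leqslant\beta_1$ from \eq{reg} and the elementary inequality $\sigma(-u) \geqslant \tfrac12 e^{-u}$ for $u \geqslant 0$, each subtracted sigmoid is at least $\tfrac12 e^{-2/\eta - \beta_1}$, whence every flipped coordinate contributes at most $1 - e^{-2/\eta-\beta_1} = 1 - 2c$, where $c \coloneqq \tfrac12 e^{-2/\eta-\beta_1}$. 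Summing, $\sum_{i\in D}|p_i(y)-p_i(z)| \leqslant (1-2c)\,\ell(y,z)$; this is the source of the contraction.

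For $i \in S$ the two arguments differ only by $s_i(y)-s_i(z)$, and both lie at distance at least $\tfrac{2}{\eta}-\beta_1$ from the origin on the same side (they are dominated by the common term $\tfrac{2}{\eta}y_i$). Invoking the universal bound $\sigma'(u) \leqslant e^{-|u|}$ together with the mean value theorem and the Lipschitz condition $|s_i(y)-s_i(z)| \leqslant \beta_2 \|y-z\|_1 = 2\beta_2\,\ell(y,z)$, each such term is at most $2\beta_2 e^{-2/\eta+\beta_1}\ell(y,z)$, so $\sum_{i\in S}|p_i(y)-p_i(z)| \leqslant 2 d \beta_2 e^{-2/\eta+\beta_1}\,\ell(y,z)$. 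Adding the two contributions gives $W \leqslant (1-2c)\ell(y,z) + 2 d \beta_2 e^{-2/\eta+\beta_1}\ell(y,z)$, and the claim follows once the expansion term is dominated by $c\,\ell(y,z)$, i.e. $2 d\beta_2 e^{-2/\eta+\beta_1} \leqslant \tfrac12 e^{-2/\eta-\beta_1}$, which is precisely the hypothesis $4\beta_2 d e^{2\beta_1} \leqslant 1$.

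The main obstacle — and the reason the factor $\tfrac12$ appears in the statement — is that one must extract a full $1-2c$ (not merely $1-c$) from the flipped coordinates, leaving half of the contraction budget free to absorb the coordinatewise expansion produced by the Lipschitz variation of the score on the agreeing coordinates. Getting the exponents to balance (a $+\beta_1$ in the expansion against a $-\beta_1$ in the contraction) is the delicate point, and it is exactly this balance that the assumption $4\beta_2 d e^{2\beta_1}\leqslant 1$ encodes.
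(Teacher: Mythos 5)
Your proof is correct and takes essentially the same route as the paper's: the coordinatewise (maximal) coupling reduces $W$ to $\sum_{i=1}^d \big|\sigma\big(\tfrac{2}{\eta}y_i + s(y)_i\big) - \sigma\big(\tfrac{2}{\eta}z_i + s(z)_i\big)\big|$, the flipped coordinates each contribute at most $1 - e^{-2/\eta-\beta_1}$, the agreeing coordinates contribute the expansion term $2d\beta_2 e^{-2/\eta+\beta_1}\,\ell(y,z)$ via the Lipschitz bound on $s$, and the hypothesis $4\beta_2 d e^{2\beta_1}\leqslant 1$ lets the latter be absorbed by half of the former. The only differences from the paper are the cosmetic choices of elementary sigmoid inequalities ($\sigma(-u)\geqslant \tfrac12 e^{-u}$ and $\sigma'(u)\leqslant e^{-|u|}$ versus the paper's $\tfrac{1}{2+e^{u}}$ bounds), which are equivalent for this purpose.
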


We note that when $\beta_1 = \alpha$ and $\beta_2 = \alpha^2$, the constraint becomes $4\alpha^2 d e^{2\alpha} \leqslant 1$ and is satisfied as soon as $\alpha \leqslant \frac{1}{4\sqrt{d}}$, without any assumption about the distribution we want to sample from.

From the previous proposition, we deduce that for each $\eta>0$, the Markov chain always converge exponentially fast to the unique stationary distribution for the Wasserstein distance~\citep{levin2017markov}. Given the exponential rate in $1-\frac{1}{2}e^{-2/\eta-\alpha}$, we can choose a step-size $\eta = 1/\alpha$ without losing too much in mixing time (this extends the strategy of~\citet{saremi2019neural} in the Gaussian case, where the step-size for the Langevin algorithm is taken to be the noise variance).

We now show that the stationary distribution of the Markov chain defined by the transition kernel $t$ cannot be too far from the one of $y$. See proof in Appendix~\ref{app:stationary}. Note that \citet{zhang2022langevin} only study the situations where the log-density is quadratic (or close to quadratic) without explicit constants.

\begin{proposition}[Distance to stationary distribution]
\label{prop:stationary}
Assume regularity conditions in \eq{reg} with  $4\beta_2 d e^{2\beta_1} \leqslant 1$.  Let $q'$ be the stationary distribution of the transition kernel defined in \eq{t}. Then,
$$W(q',q) \leqslant 2d \big(
     2 d \beta_1  e^{2\beta_1} + \sqrt{d \beta_1  e^{2\beta_1}}
     \big).$$
\end{proposition}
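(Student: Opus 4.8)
The plan is to combine the exponential contraction of the kernel $t$ from Proposition~\ref{prop:contractivity} with a one-step error estimate, following the standard perturbation-of-stationary-distribution template. Writing $\kappa = 1 - \frac{1}{2}e^{-2/\eta-\beta_1}$ for the contraction factor, the kernel has a unique stationary law $q'$, and since $q' = q't$, the triangle inequality gives $W(q',q) = W(q't,q) \leqslant W(q't, qt) + W(qt, q) \leqslant \kappa\,W(q',q) + W(qt,q)$, hence $W(q',q) \leqslant \frac{1}{1-\kappa}W(qt,q) = 2 e^{2/\eta+\beta_1}\,W(qt,q)$. This reduces everything to controlling the one-step discrepancy $W(qt,q)$ between the target and its image after a single application of $t$.

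For the one-step error I would work with a coupling of $\tilde Y \sim q$ and $Y' \sim qt$. Sampling $Y \sim q$ and then $Y' \sim t(\cdot\mid Y)$ realizes $Y'\sim qt$, and by \eq{t} the coordinates of $Y'$ are independent given $Y$ with exact flip probability $\P(Y'_i \neq Y_i \mid Y) = \sigma(-\tfrac{2}{\eta} - Y_i s_i(Y))$. The elementary inequality $\sigma(-a) \leqslant e^{-a}$ extracts the factor $e^{-2/\eta}$ that cancels the $e^{2/\eta}$ above, leaving $\P(Y'_i\neq Y_i\mid Y) \leqslant e^{-2/\eta} e^{-Y_i s_i(Y)} \leqslant e^{-2/\eta}e^{\beta_1}$ by the score bound in \eq{reg}. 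The naive ``stay'' coupling $\tilde Y = Y$ only yields $W(qt,q) \leqslant \E[\ell(Y,Y')] = O(d\,e^{-2/\eta})$, which after multiplication by $2e^{2/\eta+\beta_1}$ saturates at $O(d\,e^{2\beta_1})$ and fails to vanish as $\alpha \to 0$. The improvement must come from a better coupling that cancels the symmetric part of the flips and charges only the net drift: the displacement is a sum $\sum_{i} Z_i$ of nearly independent signed per-coordinate contributions whose mean is governed by $\E_{Y\sim q}[Y^\top s(Y)]$ (computable by the tower property through the generative coupling $y = x\circ\varepsilon$, giving scale $\sim d\beta_1$ via \eq{grad}) and whose variance is of the same order $\sim d\beta_1$. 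Bounding $\E\,|\sum_i Z_i| \leqslant |\E\sum_i Z_i| + \sqrt{{\rm Var}(\sum_i Z_i)}$ then produces the mean term $d\beta_1 e^{2\beta_1}$ and the fluctuation term $\sqrt{d\beta_1 e^{2\beta_1}}$, and reinstating the $e^{-2/\eta}/e^{2/\eta}$ cancellation together with the Wasserstein-to-coordinate conversion factors yields the stated bound $2d(2d\beta_1 e^{2\beta_1} + \sqrt{d\beta_1 e^{2\beta_1}})$.

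The main obstacle is precisely this one-step estimate $W(qt,q)$. Since a constant fraction of coordinates genuinely flip in one step (at the recommended $\eta = 1/\alpha$ the flip probability is near $1/2$), any coupling that ignores the near-stationarity of $q$ and the product structure of $t(\cdot\mid y)$ is doomed to an $O(d)$ bound; the subtlety is to show that, although many coordinates move, the distributions $qt$ and $q$ are transported into one another at a cost controlled by the much smaller net drift plus its standard deviation. Establishing the variance bound that furnishes the $\sqrt{\cdot}$ term\textemdash i.e.\ quantifying the weak dependence between the coordinates of $Y$ under $q$ so that the fluctuations of $\sum_i Z_i$ concentrate at scale $\sqrt{d\beta_1}$ rather than $d\beta_1$\textemdash is the delicate point, and is where the second-order control of the score enters through the Lipschitz constant $\beta_2 = \alpha^2$ and the covariance identity $\nabla^2\log q = \alpha^2\,{\rm cov}(x\mid y)$.
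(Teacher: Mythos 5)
Your reduction step is sound and matches the paper's skeleton: the paper also combines the contraction of Proposition~\ref{prop:contractivity} with a one-step error term and divides by $1-\kappa = \frac{1}{2}e^{-2/\eta-\beta_1}$ (it implements this by grafting a Metropolis--Hastings acceptance step onto the chain started at $q$, so that the one-step error becomes the expected rejection cost $\E[1_{\rm reject}\,\ell(z,z')]$, which upper-bounds your $W(qt,q)$). You also correctly diagnose that the naive identity coupling only gives $O(d)$ and that the entire difficulty lives in the one-step estimate. The problem is that this is exactly where your argument stops being a proof. The ``better coupling that cancels the symmetric part of the flips and charges only the net drift'' is never constructed: $W(qt,q)$ is a distance between two laws on the hypercube, and there is no canonical signed ``displacement'' $\sum_i Z_i$ whose mean and variance you can read off; asserting that the coordinates of $Y\sim q$ are weakly dependent enough for the fluctuations to concentrate at scale $\sqrt{d\beta_1}$ does not follow from $\nabla^2\log q = \alpha^2\,{\rm cov}(x\mid y)$ without substantial additional work, and you give no transport plan realizing the claimed cost.

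The paper's mechanism for the one-step error is different and is worth knowing, because it explains where each term in the bound actually comes from. It bounds the rejection cost by the detailed-balance defect $\frac{1}{2}\sum_{z,z'}\ell(z,z')\,|q(z)t(z'|z)-q(z')t(z|z')|$ and compares the pair $(q,t)$ to a reference pair $(\hat q,\hat t)$ that satisfies detailed balance \emph{exactly}, namely the uniform distribution $\hat q$ and the score-free kernel $\hat t(z'|z)\propto e^{(z/\eta)^\top z'}$. This splits the defect into (i) a term controlled by ${\rm TV}(q,\hat q)$, which Pinsker's inequality plus the score bound $\|s\|_\infty\leqslant\beta_1$ (giving ${\rm KL}(q\|\hat q)\leqslant\beta_1 d$) bounds by $\sqrt{\beta_1 d}$ --- this, not a variance computation, is the source of the $\sqrt{d\beta_1 e^{2\beta_1}}$ term --- and (ii) a term controlled by $\max_z\sum_{z'}|t(z'|z)-\hat t(z'|z)|\lesssim d\beta_1 e^{-2/\eta+\beta_1}$, which after multiplying by the diameter $d$ and by $e^{2/\eta+\beta_1}$ yields the $d\beta_1 e^{2\beta_1}$ term. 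To complete your proof you would need either to carry out this comparison-to-a-reversible-reference argument or to actually exhibit the drift-cancelling coupling you describe; as written, the key estimate is a plausible heuristic rather than a derivation.
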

In our situation where $\beta_1=\alpha$ and $\beta_2 = \alpha^2$, this is small compared to the diameter $d$ of the hypercube (its maximal value) only when $\alpha$ is small compared to $1/d$.

\subsection{Two-stage Langevin sampler} \label{sec:two-stage-langevin}
\label{sec:langevin-2}

The default Langevin sampler does not have the nice property that if $q(y) = e^{s^\top y}$ for some $s \in \rb^d$ (i.e., independent components), then the stationary distribution is exact.

Following~\citet{lee2021structured,chewi}, we consider instead the idealized two-stage sampler defined below, which is Gibbs sampling for the joint model on $(y,z)$ in~$ \{-1,1\}^d \times \{-1,1\}^d$:
$$ 
q(y,z) = q(y) \frac{1}{(2 \cosh \frac{1}{\eta} )^d} 
e^{ \frac{1}{\eta} y^\top z }\propto 
q(y)  e^{ \frac{1}{\eta} y^\top z },
$$
for which the conditional distributions can be computed as
\BEAS
q(z|y) & =  &  \frac{1}{(2 \cosh \frac{1}{\eta} )^d} e^{ \frac{1}{\eta} y^\top z }\\[-.05cm]
q(y|z) & \propto & q(y)  e^{ \frac{-1}{2 \eta} \| y - z\|_2^2 },
\EEAS
which we approximate by expanding $\log q(y) \approx \log q(z) + s(z)^\top ( y-z)$, leading to:
\BEA
\label{eq:uu} u(z|y) & =  &  q(z|y) = \frac{1}{(2 \cosh \frac{1}{\eta} )^d}e^{ \frac{1}{\eta} y^\top z } \\[-.05cm]
\label{eq:u} u(y|z) & \propto & q(z) e^{ \nabla \log q(z)^\top ( y - z) }  e^{ \frac{-1}{2 \eta} \| y - z\|_2^2 } \propto    e^{ y^\top ( \frac{z}{\eta} + s(z)   ) }.
\EEA
Thus the Markov chain (with transition kernel $v$) defined by Gibbs sampling to go from $y$ to $y'$ is defined as follows:  (1) take $z$ with a random uniform flip with probability $1-\sigma(2/\eta)$, and then perform independent (non uniform) flips with probability $1- \sigma( 2/\eta + 2 z_i s(z)_i)$ to obtain $z'$.

When $\log q(y)$ is linear in $y$, then the proposals defined by~$u$ are equal to the ones defined by $q$ (and thus the stationary distribution is exact).
Otherwise, we can show the following contractivity result (see proof in Appendix~\ref{app:contractivity-twostage}).

\begin{proposition}[Contractivity, two-stage sampler]
\label{prop:contractivity-twostage}
Assume regularity conditions in \eq{reg} with  $8d \beta_2   e^{4\beta_1} \leqslant 1$. Given $ y^{(1)}, y^{(2)} \in \{-1,1\}^d$, we have, for the transition kernel $v$ defined in \eq{uu} and \eq{u}:
$$
W( v(\cdot|y^{(2)}) , v(\cdot| y^{(1)}) ) \leqslant
\big(  
  1 - \frac{1}{2} e^{  - \frac{2}{\eta} -   2\beta_1}  \big) \ell( y^{(1)}, y^{(2)}).
$$
\end{proposition}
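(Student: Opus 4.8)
The plan is to exploit that the kernel $v$ is the composition of the two coordinate-factorized kernels $u(z\mid y)$ and $u(y'\mid z)$ from \eq{uu} and \eq{u}, and to bound $W$ by the expected Hamming distance of an explicit two-stage coupling (mirroring the one-stage argument behind Proposition~\ref{prop:contractivity}). Starting from $y^{(1)},y^{(2)}$, I would first draw $(z^{(1)},z^{(2)})$ from a coupling of $u(\cdot\mid y^{(1)})$ and $u(\cdot\mid y^{(2)})$, then draw $(y'^{(1)},y'^{(2)})$ from a coupling of $u(\cdot\mid z^{(1)})$ and $u(\cdot\mid z^{(2)})$. Since both kernels have independent components, I take the per-coordinate maximal coupling at each stage, so that the conditional expected number of disagreements is the sum over coordinates of one-dimensional total variation distances, and the composed coupling yields $W(v(\cdot\mid y^{(1)}),v(\cdot\mid y^{(2)})) \leqslant \E[\ell(y'^{(1)},y'^{(2)})]$. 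The first (uniform-flip) stage only contracts: writing $S_1=\{i: z^{(1)}_i\neq z^{(2)}_i\}$, maximal coupling gives $\E|S_1|\leqslant (1-2\sigma(-2/\eta))\,\ell(y^{(1)},y^{(2)})\leqslant \ell(y^{(1)},y^{(2)})$, and I would simply discard the extra factor $1-2\sigma(-2/\eta)\leqslant 1$.

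The heart of the argument is the second stage, analyzed conditionally on $(z^{(1)},z^{(2)})$. Coordinate $i$ is set to $+1$ with probability $\pi^{(k)}_i=\sigma(2z^{(k)}_i/\eta+2s(z^{(k)})_i)$, so the coupled disagreement probability is $|\pi^{(1)}_i-\pi^{(2)}_i|$, and I split coordinates according to whether $i\in S_1$ or $i\notin S_1$. For $i\in S_1$ the terms $2z_i/\eta$ have opposite signs; bounding $\|s\|_\infty\leqslant\beta_1$ and using $\sigma(-t)\geqslant\tfrac12 e^{-t}$ gives $|\pi^{(1)}_i-\pi^{(2)}_i|\leqslant 1-2\sigma(-2/\eta-2\beta_1)\leqslant 1-e^{-2/\eta-2\beta_1}$, contributing $|S_1|(1-e^{-2/\eta-2\beta_1})$ in total.

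The main obstacle is the agreeing coordinates $i\notin S_1$, where $\pi^{(1)}_i$ and $\pi^{(2)}_i$ differ only through $s(z^{(1)})_i$ versus $s(z^{(2)})_i$. The crude $\tfrac14$-Lipschitz bound on $\sigma$ is too weak here: it produces a term proportional to $d\beta_2$ with no dependence on $\eta$, which cannot be absorbed into the contraction when $\eta$ is small. The fix is to observe that both arguments $2z_i/\eta+2s(z^{(k)})_i$ have magnitude at least $2/\eta-2\beta_1$, so the relevant sigmoid slope obeys $\sigma'(t)\leqslant\sigma(-|t|)\leqslant e^{2\beta_1-2/\eta}$; combined with $\|s(z^{(1)})-s(z^{(2)})\|_\infty\leqslant\beta_2\|z^{(1)}-z^{(2)}\|_1=2\beta_2|S_1|$ from \eq{reg}, this yields $|\pi^{(1)}_i-\pi^{(2)}_i|\leqslant 4\beta_2 e^{2\beta_1-2/\eta}|S_1|$ per coordinate, hence at most $4d\beta_2 e^{2\beta_1-2/\eta}|S_1|$ over all agreeing coordinates.

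Adding the two contributions, the conditional expected distance is at most $|S_1|\big(1-e^{-2/\eta}(e^{-2\beta_1}-4d\beta_2 e^{2\beta_1})\big)$. At this point I would invoke the hypothesis $8d\beta_2 e^{4\beta_1}\leqslant 1$, which is exactly what gives $4d\beta_2 e^{2\beta_1}\leqslant\tfrac12 e^{-2\beta_1}$ and therefore $e^{-2\beta_1}-4d\beta_2 e^{2\beta_1}\geqslant\tfrac12 e^{-2\beta_1}$, turning the conditional bound into $|S_1|(1-\tfrac12 e^{-2/\eta-2\beta_1})$. Taking expectations over $(z^{(1)},z^{(2)})$ and using $\E|S_1|\leqslant\ell(y^{(1)},y^{(2)})$ from the first stage completes the proof. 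The only elementary facts used are $\tfrac12 e^{-t}\leqslant\sigma(-t)\leqslant e^{-t}$ and $\sigma'(t)\leqslant\sigma(-|t|)$ for $t\geqslant 0$, which I would verify separately; the edge case $2/\eta<2\beta_1$, where the slope bound $e^{2\beta_1-2/\eta}$ exceeds $\tfrac14$, is handled by the trivial bound $\sigma'\leqslant\tfrac14$.
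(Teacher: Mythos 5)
Your proposal is correct and follows essentially the same route as the paper: the paper's proof simply says ``reuse the one-stage argument for the second stage (with $\beta_1,\beta_2$ effectively doubled because the score enters $u(y|z)$ without the $1/2$ factor), note the first stage contracts trivially, and compose,'' which when unpacked is exactly your coordinate-wise maximal coupling with the split into agreeing coordinates (Lipschitz bound on $\sigma$ with slope at most $e^{2\beta_1-2/\eta}$) and disagreeing coordinates (total-variation range bound $1-2\sigma(-2/\eta-2\beta_1)$), and the same use of $8d\beta_2 e^{4\beta_1}\leqslant 1$ to absorb the Lipschitz term into half the contraction gap. The only cosmetic difference is that the paper keeps the first-stage factor $1-e^{-2/\eta}$ and bounds the product of the two contractions by the second factor, whereas you discard it immediately via $\E|S_1|\leqslant \ell(y^{(1)},y^{(2)})$; these are equivalent.
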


Like in \mysec{langevin-1}, this leads to exponential convergence to a unique stationary distribution. 
We can now look at the distance between the stationary distribution of the Markov chain and $q$. We make the assumption that the score $s$ that we use satisfies the usual inequality of convex smooth functions~\citep[Section 5.2]{ltfp}, that is, for all $y,z \in \{-1,1\}^d$,
\BEQ
\label{eq:CCA}
0 \! \leqslant \!\log q(y) \!-\! \log q(z)\! -\! s(z)^\top ( y\!-\! z) \! \leqslant \! \frac{\beta_2}{2} \| y \!-\! z\|_1^2,
\EEQ
which is satisfied by $s(y) = \nabla \log q(y)$ in \eq{q}. See proof of Prop.~\ref{prop:stationary-2} in Appendix~\ref{app:convergence-twostage}.

\begin{proposition}[Distance to stationary distribution]
\label{prop:stationary-2}
Assume regularity conditions in \eq{reg} and \eq{CCA} with  $8 \beta_2 d e^{4\beta_1} \leqslant 1$.  
Assume  $e^{-2/\eta + 2 \beta_1} \leqslant \frac{1}{d}$. Let $q'$ be the stationary distribution of the two-stage sampler. Then,
$$W(q',q) \leqslant 12 d \sqrt{ \beta_2 d}.$$
\end{proposition}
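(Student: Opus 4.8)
The plan is to combine the geometric contraction of the two-stage kernel from Proposition~\ref{prop:contractivity-twostage} with a one-step bias estimate, and then reduce the bias to a purely conditional (per-$z$) computation. Write $\kappa = 1 - \tfrac12 e^{-2/\eta - 2\beta_1}$ for the contraction factor, let $v$ be the approximate kernel of \eq{uu} and~\eq{u}, and let $P$ be the idealized Gibbs kernel built from the exact conditionals $q(z|y)$ and $q(y|z)$, for which the marginal $q$ is stationary, i.e.\ $qP = q$. The pointwise contraction of Proposition~\ref{prop:contractivity-twostage} lifts, by the usual gluing of optimal couplings, to $W(\mu v, \nu v) \leqslant \kappa\, W(\mu,\nu)$ for all distributions $\mu,\nu$. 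Applying this with $\mu = q'$ (stationary, so $q'v = q'$) and $\nu = q$, and then the triangle inequality through $qv$, gives
\[
W(q',q) = W(q'v, q) \leqslant \kappa\, W(q',q) + W(qv, q),
\qquad\text{hence}\qquad
W(q',q) \leqslant \frac{W(qv,q)}{\tfrac12 e^{-2/\eta - 2\beta_1}}.
\]
So it remains to show that the one-step bias $W(qv,q)$ is of order $e^{-2/\eta - 2\beta_1}\, d\sqrt{\beta_2 d}$, i.e.\ that the exponentially small factor in the denominator is matched by the bias.

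To estimate $W(qv,q) = W(qv,qP)$, I would couple the two chains so that they share the first, exact, half-step: since $u(z|y) = q(z|y)$ by \eq{uu}, starting both chains from $q$ produces the pair $(y,z)$ with the exact joint law $q(y,z)$, so that $z$ is distributed as the $z$-marginal $q(z)$ in both. Keeping this common $z$ and then applying the two distinct second half-steps $u(\cdot|z)$ and $q(\cdot|z)$ gives
\[
W(qv, q) \;\leqslant\; \E_{z \sim q(z)}\big[\, W\!\big(u(\cdot|z),\, q(\cdot|z)\big)\,\big].
\]
This is the key reduction: the bias is the average, over $z$, of the transport cost between the approximate and exact conditionals.

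The crux is therefore bounding $W(u(\cdot|z), q(\cdot|z))$ for fixed $z \in \{-1,1\}^d$. From \eq{u} and the identity $\|y-z\|_2^2 = 2d - 2 y^\top z$ on the hypercube, both conditionals are supported on $\{-1,1\}^d$ with $u(y|z) \propto e^{(z/\eta + s(z))^\top y}$ a product measure and $q(y|z) \propto u(y|z)\, e^{w(y)}$, where $w(y) = \log q(y) - \log q(z) - s(z)^\top (y-z)$ satisfies $0 \leqslant w(y) \leqslant \tfrac{\beta_2}{2}\|y-z\|_1^2$ by \eq{CCA}. Under the product measure $u(\cdot|z)$ each coordinate flips (relative to $z_i$) with probability at most $\sigma(-2/\eta + 2\beta_1) \leqslant e^{-2/\eta + 2\beta_1} \leqslant 1/d$, so the number of flips $k = \tfrac12\|y-z\|_1$ concentrates near $0$ with mean at most $1$. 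Consequently $q(\cdot|z)$ is a small reweighting of $u(\cdot|z)$ that only moves the $O(d\,e^{-2/\eta+2\beta_1})$-fraction of mass lying away from $z$, and a Chernoff bound on $k$ controls the tail where the quadratic factor $e^{w}\leqslant e^{\beta_2\|y-z\|_1^2/2}$ is large; the hypothesis $8\beta_2 d e^{4\beta_1}\leqslant 1$ is precisely what keeps $\E_{u(\cdot|z)}[e^{w}]$ close to $1$. I would then bound the transport cost by a first-moment (expected number of differing coordinates) term plus a standard-deviation term, the latter producing the $\sqrt{\beta_2 d}$ factor; both terms carry the per-coordinate flip probability $\lesssim e^{-2/\eta+2\beta_1}$, so that after dividing by $\tfrac12 e^{-2/\eta-2\beta_1}$ the exponential cancels and only the polynomial factor $12 d\sqrt{\beta_2 d}$ survives.

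The main obstacle is this last step: one must track the exact exponential dependence of $W(u(\cdot|z),q(\cdot|z))$ on $1/\eta$ so that it cancels cleanly against the contraction gap $1-\kappa$, while simultaneously dominating the quadratic growth of the reweighting $e^{w}$ by the rapidly decaying tail of the flip count $k$. This is where the two quantitative hypotheses, $8\beta_2 d e^{4\beta_1}\leqslant 1$ and $e^{-2/\eta+2\beta_1}\leqslant 1/d$, do the real work, and where the square-root term in the final bound originates.
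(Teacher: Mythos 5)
Your proposal is sound in outline and, in one important respect, takes a genuinely different (and cleaner) route than the paper. Both arguments share the same skeleton: the contraction of Proposition~\ref{prop:contractivity-twostage} plus a one-step bias term, followed by division by the gap $\frac{1}{2}e^{-2/\eta-2\beta_1}$, so that the whole game is showing the bias carries a matching factor of $e^{-2/\eta}$. Where you diverge is in how the bias is decomposed. The paper appends a Metropolis--Hastings correction to the chain started at $q$, so the bias becomes the expected rejection cost $\frac12\sum_{y,\bar y}\ell(\bar y,y)\,|q(y)v(\bar y|y)-q(\bar y)v(y|\bar y)|$, which is then symmetrized through the latent $z$ and split by the triangle inequality into the two terms of \eq{AAA}. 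You instead exploit that the first half-step is exact, $u(z|y)=q(z|y)$, to couple the two chains through a common $z$ drawn from the $z$-marginal of $q(y,z)$, reducing directly to $W(qv,q)\leqslant\E_{z}\big[W(u(\cdot|z),q(\cdot|z))\big]$ by convexity of $W$; routing all discrepant mass through $z$ then bounds each term by $\sum_{\bar y}\ell(\bar y,z)\,|q(\bar y|z)-u(\bar y|z)|$, i.e.\ essentially only the first term of \eq{AAA}, with the max over $z$ replaced by an average and with no detailed-balance bookkeeping and no separate Pinsker/TV term. That is a real simplification. What you have not done is the quantitative core: the per-$z$ estimate with explicit constants. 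Your sketch names exactly the right ingredients --- the reweighting $q(\cdot|z)\propto u(\cdot|z)e^{\varphi}$ with $0\leqslant\varphi\leqslant\frac{\beta_2}{2}\|y-z\|_1^2$ from \eq{CCA}, a normalizer $Z(z)\geqslant 1$ with $\log Z(z)$ small, and the per-coordinate flip probability at most $e^{-2/\eta+2\beta_1}\leqslant 1/d$ --- and these are precisely what the paper manipulates in \eq{D}, \eq{Z} and \eq{Zp}, so your plan would go through. But until that computation is carried out (in particular controlling the $\log Z(z)$ contribution and the product over the remaining $d-1$ coordinates under $8d\beta_2e^{4\beta_1}\leqslant 1$, and checking where $e^{-2/\eta+2\beta_1}\leqslant 1/d$ is needed to trade a power of $d$ for the square root), the constant $12$ and the $\sqrt{\beta_2 d}$ scaling are asserted rather than proved.
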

Moreover, as shown in Appendix~\ref{app:convergence-twostage}, if the step-size $\eta$ is small enough, we get a dependence in $d \cdot d \beta_2$.

For our problem where $\beta_1=\alpha$ and $\beta_2 = \alpha^2$, the constraint in $\eta$ leads to a mixing time proportional to $d$, but to a distance to the true distribution $q$ proportional to $d \cdot \alpha^2 d$ or $d \cdot \alpha \sqrt{d}$ as opposed to $d \cdot \sqrt{\alpha} d$ for the one-stage sampler, thus an advantage for small $\alpha$ (high noise), where we only need $\alpha \ll 1/\sqrt{d}$ instead of $\alpha \ll 1/d$ for one-stage sampling.

\section{Comparison with Gaussian noise}

An alternative is to add Gaussian noise and define
$$
y_{\rm G} = x  + \varepsilon,\ \varepsilon \sim \mathcal{N}(0,\frac{1}{\alpha}\idm).
$$
We then have
$
\E [ x | y_G]
 = y_G + \frac{1}{\alpha} \nabla \log q_\alpha^{\rm G} (y_{\rm G}),
$
from the classical Tweedie-Miyasawa formula, with $q_\alpha^{\rm G}$ the density of $y_G$:
\begin{equation*}
q_\alpha^{\rm G}(y_{\rm G}) \propto  \sum_{x \in \{-1,1\}^d}\!\! p(x) e^{ \alpha x^\top y} e^{- \frac{\alpha}{2} \| y\|_2^2 }
 e^{- \frac{\alpha}{2} \| x\|_2^2 }
 \propto \sum_{x \in \{-1,1\}^d}\!\! \!p(x) e^{ \alpha x^\top y} e^{- \frac{\alpha}{2} \| y\|_2^2 }
 \propto q_\alpha(y_{\rm G}) e^{- \frac{\alpha}{2} \| y\|_2^2 },
\end{equation*}
where $q_\alpha$ is the density of the binary case defined in \eq{q}. Thus,
$$
\E [ x | y_{\rm G}] = \frac{1}{\alpha} \nabla \log q_\alpha (y_{\rm G}),
$$
that is, the exact same denoiser function (now applied to an element of $\rb^d$ and not $\{-1,1\}^d$).

In terms of denoising performance, for the same $\alpha$, we see in our experiments that they behave similarly. However, in terms of mixing time of Langevin,
 for the Gaussian case (e.g., when sampling by adding Gaussian noise), the known upperbounds based on log-concavity obtained for Gaussian mixtures by \citet{saremichain} is $\alpha \leqslant \frac{1}{d}$, which is significantly worse than our two-stage sampler.

\section{Experiments}
We now study how our new sampling scheme operates, first on synthetic data to understand the role of the noise parameter $\alpha$ and step-size $\eta$, then on binarized MNIST digits.

\subsection{Synthetic data}
We consider in this section mixtures of two independent binary vectors, that is,
we consider, for $\beta>0$,
$$
p(x) = \frac{1}{(2 \cosh \beta)^d}
\Big[ \frac{1}{2} e^{\beta 1_n^\top x}  +
\frac{1}{2} e^{-\beta 1_n^\top x} \Big],
$$
for which all score functions can be computed (see Appendix~\ref{app:mixtures}). When $\beta$ is small, $p$ is close to the uniform distribution (a ``weak'' prior), while when $\beta$ is large, $p$ is close to a sum of two Diracs at opposite points in the hypercube (a ``strong'' prior).

For $d$ small ($d=8$ below), it is possible to perform all computations in closed form, (e.g., with infinitely many replications), by computing transition matrices of size $2^d \times 2^d$. This allows to analyze precisely the denoising performance.

\textbf{Denoising performance with exact samples from $y$.} In Fig.~\ref{fig:denoising}, we consider three values of $\beta$ and vary $\alpha$. As expected, when $\alpha$ decreases, the noise increases, and the denoising performance degrades. When $\beta$ is large, the prior has a strong effect, so denoising helps. When $\beta$ is small, the prior is not strong, denoising has little effect. Note also that when $\alpha$ is large, denoising simply outputs $y$ (the threshold where it happens depend on the strength of the prior).

Moreover, since the upper bound in Lemma~\ref{lemma:denoising-performance} is obtained from the mean-square-error, it shows that the denoising performance is significantly better than the bound suggests.

\begin{figure}
\begin{center}
\hspace*{-1cm}   
\includegraphics[width=0.85\columnwidth]{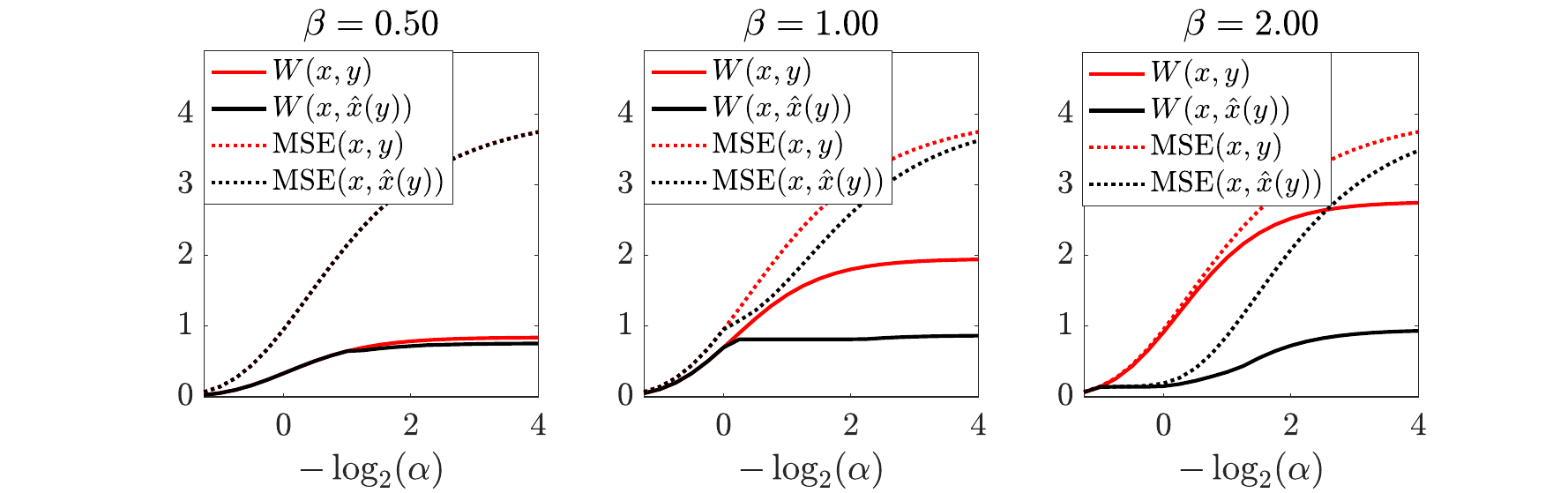}
\end{center}
\vspace*{-.6cm}

\caption{Optimal denoising from strong priors (large $\beta$) to weak priors (small $\beta$): comparison between Wasserstein distance and mean-square-error of denoising performance.
\label{fig:denoising}}
\end{figure}

\textbf{Denoising performance with multiple measurements.} 
In Fig.~\ref{fig:multiple}, we assess the benefits of multiple measurements by showing how the Wasserstein distance between our desired (noiseless) distribution on $x$ is estimated more closely by optimal denoising from $m$ measurements when $m$ is increasing, in particular for small $\alpha$ (high noise).

\begin{figure}
\begin{center}
\hspace*{-.85cm}    
\includegraphics[width=0.85\columnwidth]{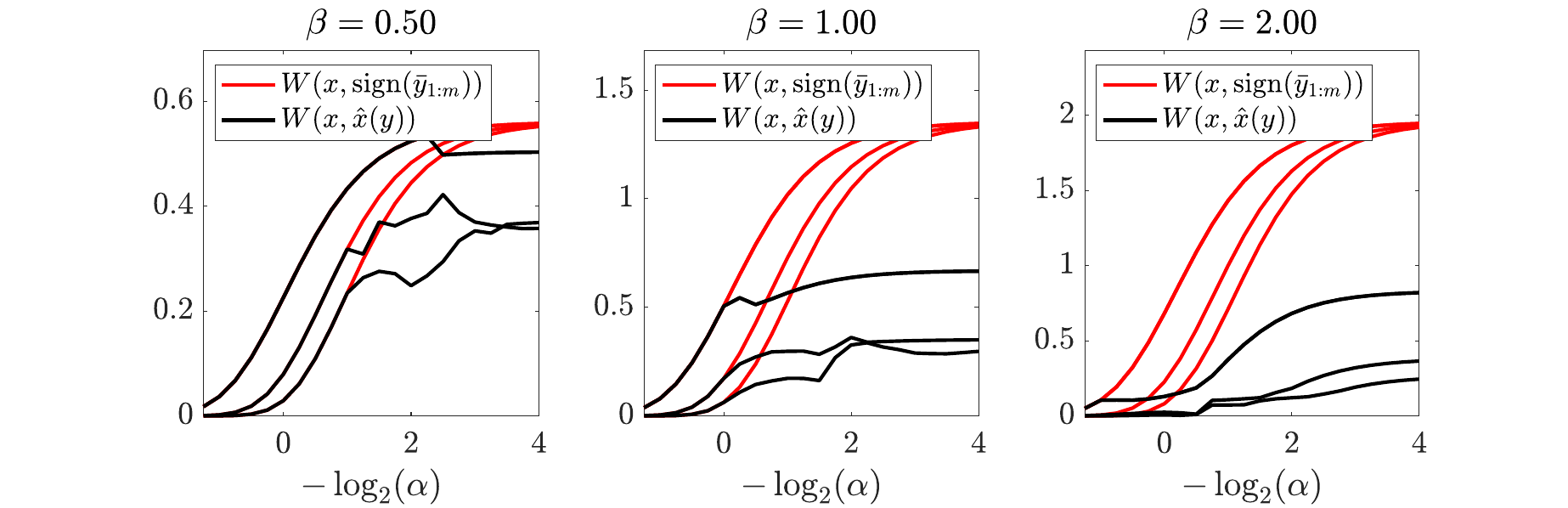}
\end{center}

\vspace*{-.6cm}

\caption{Optimal denoising from multiple measurements, for $m=1,3,5$ (one curve per $m$), for $d=6$, and three values of $\beta$, from strong priors (large $\beta$) to weak priors (small $\beta$).
\label{fig:multiple}}
\end{figure}

\textbf{Comparisons of mixing times and distance to stationary distribution.}
We compare our two  sampling schemes (one-stage from \mysec{langevin-1}, and two-stage from \mysec{langevin-2}), and study the associated step sizes in terms of distance between the stationary distribution of the Markov chain and the desired distribution, and mixing time, which is here characterized by $1/(1-\lambda_2)$, where $\lambda_2$ is the second largest eigenvalues of the transition matrix, a classical characterization of mixing time~\citep{levin2017markov}.

We see in Fig.~\ref{fig:mix} that when the step-size $\eta$ is too small, the mixing time explodes for all schemes (as predicted by Props.~\ref{prop:contractivity} and~\ref{prop:contractivity-twostage}), and that that for $\eta = 1/\alpha$ we obtain reasonable mixing times.

\begin{figure}[t!]
\begin{center}
\hspace*{-.7cm}    
\includegraphics[width=0.75\columnwidth]{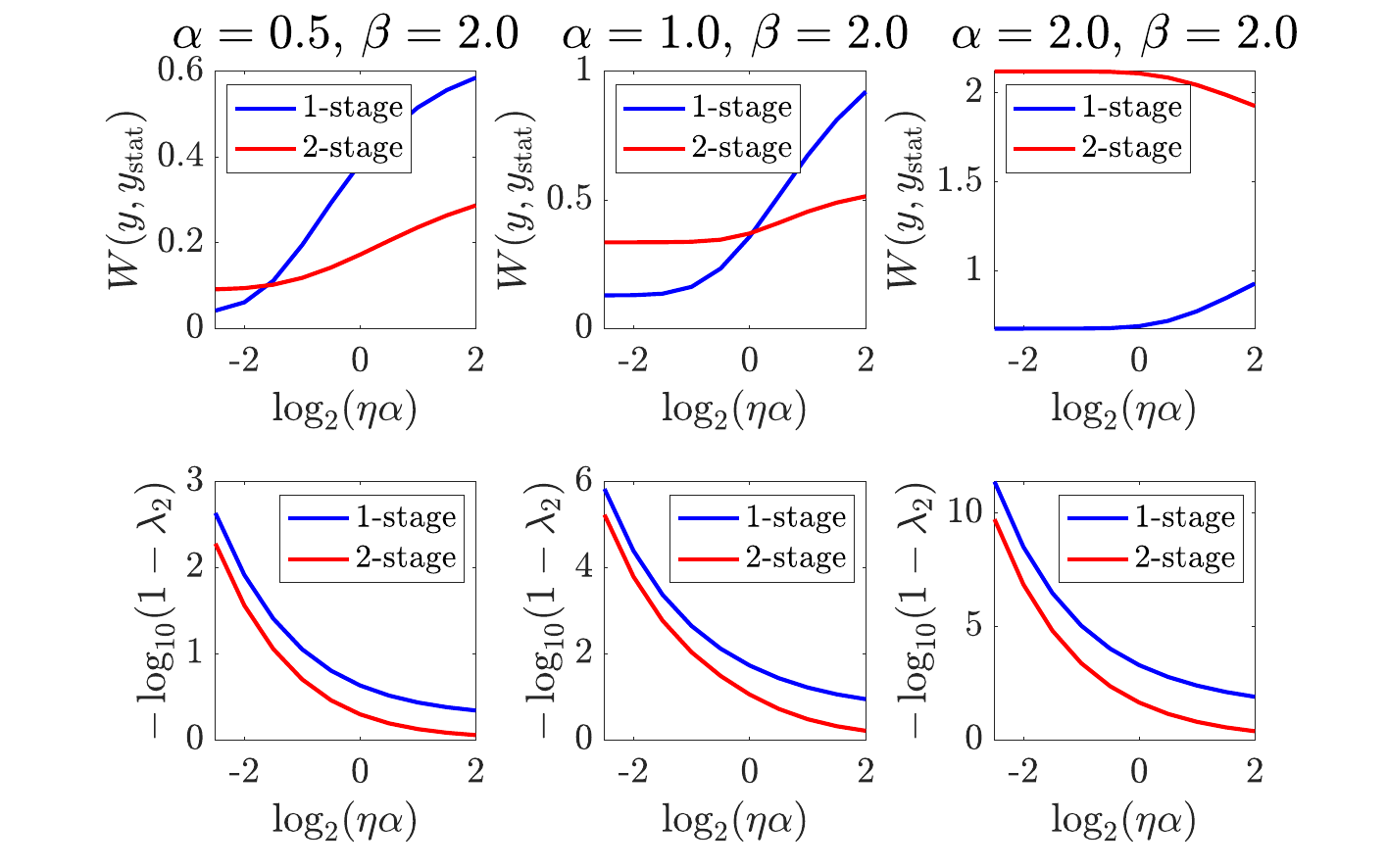}
\end{center}

\vspace*{-.62cm}

\caption{Comparison of 1-stage and 2-stage Langevin sampling. Top: distance to desired distribution $W(y,y_{\rm stat})$, bottom: mixing time (in log scale).\label{fig:mix}}
\end{figure}

 \textbf{Denoising performance with samples obtained by discrete Langevin.} We consider in Fig.~\ref{fig:perfmix} a learning rate equal to $1/\alpha$, and plot the Wasserstein distance to the distribution of $x$ for our two samplers. When $\alpha$ is large, the stationary distribution is far from the one of $y$, with bad performance. With $\alpha$ small, then the denoising performance is not great because too much noise is added. When $\beta$ is large (right plot), there is a clear sweet spot. Moreover, the two-stage sampler only provides improvements for small $\alpha$'s.

\begin{figure}[t!]
\begin{center}
\hspace*{-.55cm}    
\includegraphics[width=0.75\columnwidth]{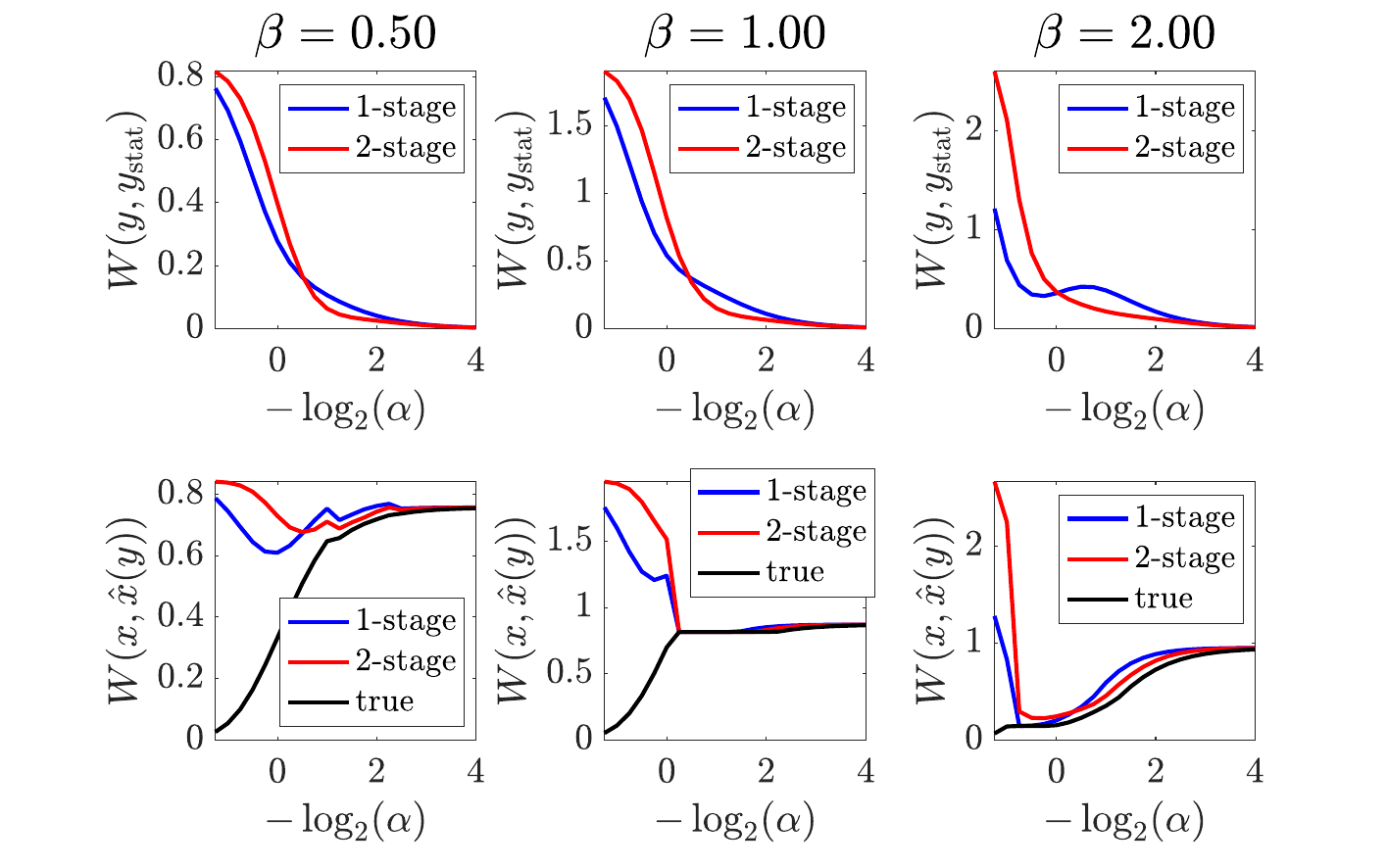}
\end{center}

\vspace*{-.62cm}

\caption{Comparison of 1-stage and 2-stage Langevin sampling. Top: distance to desired distribution $W(y,y_{\rm stat})$, bottom: denoising performance of the stationary distributions, measured in Wasserstein distance. \label{fig:perfmix}}
\end{figure}

\begin{figure}[t!]
    \centering
    \subfigure[$x$ from the test set]{
        \includegraphics[width=0.75\columnwidth]{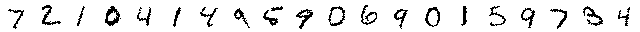}
        \label{fig:mnist-denoise:a}
    }
    \subfigure[$y=x \circ \varepsilon$, $\alpha=0.25$]{
        \includegraphics[width=0.75\columnwidth]{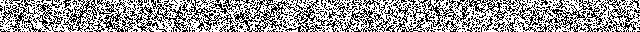}
        \label{fig:mnist-denoise:b}
    }    
    \subfigure[$\E\text{[}x|y\text{]}$, $\alpha=0.25$]{
        \includegraphics[width=0.75\columnwidth]{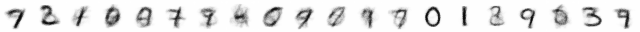}
        \label{fig:mnist-denoise:c}
    }    
    \subfigure[$\sign(\E\text{[}x|y\text{]})$, $\alpha=0.25$]{
        \includegraphics[width=0.75\columnwidth]{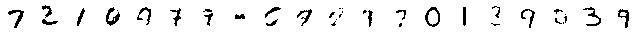}
        \label{fig:mnist-denoise:d}
    }    
    \subfigure[$y=x \circ \varepsilon$, $\alpha=0.5$]{
        \includegraphics[width=0.75\columnwidth]{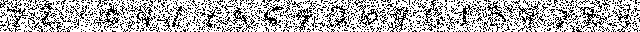}
        \label{fig:mnist-denoise:e}
    }
    \subfigure[$\E\text{[}x|y\text{]}$, $\alpha=0.5$]{
        \includegraphics[width=0.75\columnwidth]{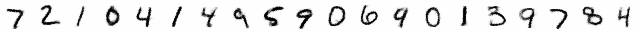}
        \label{fig:mnist-denoise:f}
    }
    \subfigure[$\sign(\E\text{[}x|y\text{]})$, $\alpha=0.5$]{
        \includegraphics[width=0.75\columnwidth]{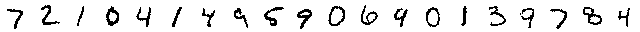}
        \label{fig:mnist-denoise:g}
    }

    \caption{The denoising performance on binarized MNIST at two high Bernoulli noise levels ($\alpha=0.25$, and $\alpha=0.5$). }
    \label{fig:mnist-denoise}
\end{figure}

\subsection{Binarized MNIST} \label{sec:mnist}
In this section we present our experiments on MNIST~\citep{lecun1998gradient}. The clean binary data were prepared by scaling the pixel values be in $[0,1]$ which we set as the probability of the Bernoulli distribution. The denoising is set up using logistic regression as outlined in \mysec{learning-score}, where we parametrize $f_\theta$ using the U-Net architecture~\citep{ronneberger2015u} with the modifications by~\citet{dhariwal2021diffusion}. For optimization, we used AdamW~\citep{loshchilov2019decoupled} with the constant learning rate of $10^{-4}$ and the weight decay of $10^{-2}$. 
We present our experiments for $\alpha \in \{ 0.25, 0.5, 1, 2 \}$ in Figs.~\ref{fig:mnist-denoise} and~\ref{fig:mnist-langevin}.

\textbf{Denoising performance.} Fig.~\ref{fig:mnist-denoise:a} shows 20 random binarized samples from the test set.  Fig.~\ref{fig:mnist-denoise} (b-d) shows the denoising performance of a trained model for very high noise, $\alpha=0.25$, where we show both $\mathbb{E}[x|y]$, parametrized as $\tanh(f_\theta(y)/2)$ (see \mysec{learning-score}) and $\sign(\mathbb{E}[x|y])$ which is optimal under the Hamming loss. In Fig.~\ref{fig:mnist-denoise} (e-g) we repeat this experiment for a trained model at lower noise level $\alpha=0.5$. Qualitively, this is a sweet spot as the noise is high, yet the denoising performance is acceptable. Note the ``5'' flipping to ``3'', and ``3'' flipping to ``8'' by the denoiser due to the high noise. This already anticipates the fast mixing that could be achieved at this level of noise.

\textbf{Sampling performance.} Fig.~\ref{fig:mnist-langevin} (a-f) illustrates the mixing performance of our algorithm for various noise levels. The step-size $\eta$ is set to $1/\alpha$ in all experiments. All panels show 100 steps of the algorithm, where the sampler is initialized at random bits. In Fig.~\ref{fig:mnist-langevin} (a-c) we see the performance of the algorithm in ``real time'', where all the steps are shown ($\Delta k \!=\! 1$). These results show the remarkable mixing our algorithm can achieve. Fig.~\ref{fig:mnist-langevin} (d-e) shows the typical performance of the algorithm for smaller noise $\alpha=1$, where the samples are sharper but there is less mixing; here the results are shown by skipping 5 steps ($\Delta k \!=\! 5$). Finally, in Fig.~\ref{fig:mnist-langevin:f} we see the sampling performance for smaller noise ($\alpha=2$), where the sampling algorithm simply breaks down.

\begin{figure}[t!]
    \centering
    \subfigure[two-stage discrete Langevin ($\Delta k = 1$), $\alpha=0.5$]{
        \includegraphics[width=0.75\columnwidth]{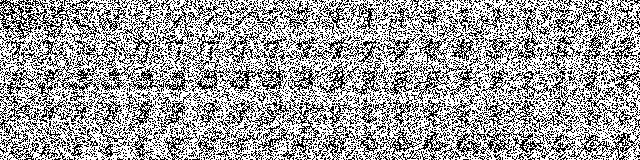}
        \label{fig:mnist-langevin:a}
    }
        \subfigure[denoised samples, $\alpha=0.5$]{
        \includegraphics[width=0.75\columnwidth]{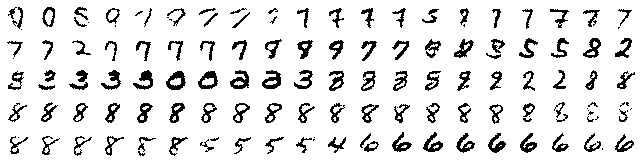}
        \label{fig:mnist-langevin:b}
    }
        \subfigure[vanilla discrete Langevin ($\Delta k = 1$),  $\alpha=0.5$, denoised]{
        \includegraphics[width=0.75\columnwidth]{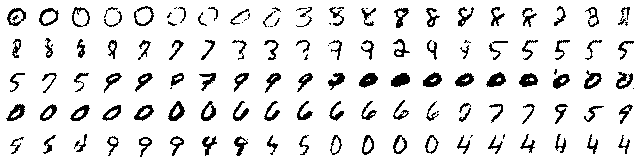}
        \label{fig:mnist-langevin:c}
    }    
        \subfigure[two-stage discrete Langevin ($\Delta k = 5$), $\alpha=1$]{
        \includegraphics[width=0.75\columnwidth]{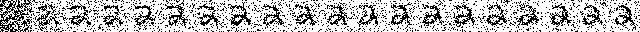}
        \label{fig:mnist-langevin:d}
    }  
        \subfigure[two-stage discrete Langevin ($\Delta k = 5$), $\alpha=1$, denoised]{
        \includegraphics[width=0.75\columnwidth]{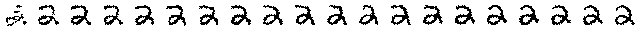}
        \label{fig:mnist-langevin:e}
    }      
        \subfigure[two-stage discrete Langevin ($\Delta k = 5$), $\alpha=2$]{
        \includegraphics[width=0.75\columnwidth]{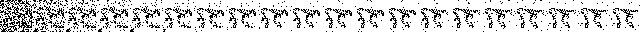}
        \label{fig:mnist-langevin:f}
    }         


    \caption{The sampling performance of our algorithm for binarized MNIST at three Bernoulli noise levels, visualized on single Markov chains (viewed left to right, top to bottom). (a) Two-stage discrete Langevin at $\alpha=0.5$, (b) the denoised samples are shown, (c) due to space only denoised samples are shown for the vanilla (single-stage) algorithm, (d,e) here, $\alpha=1$, and we skip every 5 steps, (f) $\alpha=2$, denoised samples are not shown as the noise is small. }
    \label{fig:mnist-langevin}
\end{figure}

\section{Conclusion} This study was motivated by whether we can reproduce the success of sampling through denoising while staying within the binary world. This required to reproduce the three key factors: (i) denoising through score functions, (ii) sampling noisy data via ``smoothed'' score functions, (iii) benefiting from multiple Bernoulli measurements. We achieved all three in an algorithmically simple framework.

There are several avenues for future research: (1) our framework relies on using a noise process from an exponential family (here, Bernoulli) and can readily be extended to more complex ones; (2) sharper denoising results for strong priors could also be examined; finally, (3) faster sampling could be achieved through the proper use of Metropolis-Hasting's step~\citep{robert2004metropolis}.

\subsection*{Acknowledgements}
This work has received support from the French government, managed by the National Research Agency, under the France 2030 program with the reference "PR[AI]RIE-PSAI" (ANR-23-IACL-0008).


\bibliography{denoising}
\bibliographystyle{apalike}

\newpage
\appendix
\onecolumn

\section{Proof of Lemma~\ref{lemma:denoising-performance-m}}
\label{app:denoising-performance-m}

We denote $\varepsilon_1,\dots, \varepsilon_m \in \{-1,1\}^d$ the $m$ independent noise variables defined in \eq{noise}. We assume that $m$ is odd for simplicity. The case $m$ even can be done similarly by splitting the case where $\sum_{i=1}^m \varepsilon_i=0$. 

Following the same reasoning that in the proof of Lemma~\ref{lemma:denoising-performance-m}, the Wasserstein distance is less than $d$ times the probability that $\sum_{i=1}^m (\varepsilon_i)_1 \leqslant 0$. Since $(\varepsilon_i)_1 = 2 u_i - 1$ where $u_i$ is a Bernoulli random variable with parameter $\sigma(2\alpha) \in [1/2,1]$. We need to upper bound, using the Chernoff bound,\footnote{See \url{https://en.wikipedia.org/wiki/Chernoff_bound}.} the following probability as: 
\BEAS
\P \Big( \frac{1}{m} \sum_{i=1}^m u_i \leqslant \frac{1}{2} \Big)
& \leqslant & \exp( - m \cdot D\big(\frac{1}{2} \| \sigma(2\alpha)\big)\big),
\EEAS
where, for $\alpha \geqslant 0$,
\BEAS
D\big(\frac{1}{2} \| \sigma(2\alpha)\big)
& = & \frac{1}{2} \log \frac{1}{2 \sigma(2\alpha)}
+\frac{1}{2} \log \frac{1}{2 \sigma(-2\alpha)}\\
& = & \frac{1}{2} \log \frac{1+e^{-2\alpha}}{2}
+\frac{1}{2} \log \frac{1+e^{2\alpha}}{2 } = - \log(2) + \frac{1}{2} \log( 2 + 2 \cosh (2\alpha))
\\
& = &   - \frac{1}{2}\log(2) + \frac{1}{2} \log( 1 +   \cosh (2\alpha)) \leqslant \alpha,
\EEAS
by convexity and the fact that the derivative of the function above tends to one at infinity. This leads to the desired result.

\section{Proof of Proposition~\ref{prop:contractivity}}
\label{app:contractivity}

\begin{proof}
   We consider two random variables $y'$ and $z'$ marginally distributed from $t(\cdot|y)$ and $ t(\cdot|z)$. We have, by definition of the Wasserstein distance:
\BEA
\notag W( t(\cdot|y) , t(\cdot| z) ) & = &
  \inf_{\rm joint \ coupling} \sum_{i=1}^d \P( y_i' \neq z_i' ) \mbox{ by definition of } W, \\ & \leqslant    & \sum_{i=1}^d \inf_{\rm marginal \ coupling}  \P( y_i' \neq z_i') \\
\notag  & & \hspace*{.1cm} \mbox{because we can construct canonically a joint coupling from marginal couplings,} \\
\notag & = &    \sum_{i=1}^d \big| \P(y'_i = 1  ) -\P(z'_i = 1  )
\big|   \mbox{ because of properties of total variation,}
\\
\label{eq:G} & = &   \sum_{i=1}^d \big|\sigma\big(  \frac{2}{\eta} y_i +  s(y)_i\big )  
-\sigma\big(  \frac{2}{\eta} z_i +   s(z)_i\big )  
\big|,
\EEA
by definition of the transition kernel $t$ in \eq{t} For proprieties of the total variation distance, see {\small
\url{https://en.wikipedia.org/wiki/Total_variation_distance_of_probability_measures}}. We can then separate $i$'s according to  $y_i=z_i$ or $y_i = z_i$, to get from \eq{G}:
\BEAS
W( t(\cdot|y) , t(\cdot| z) ) 
& \leqslant  &   \sum_{i, y_i = z_i }\big|\sigma\big(  \frac{2}{\eta} y_i +   s(y)_i\big )  
-\sigma\big(  \frac{2}{\eta}  z_i +    s(z)_i\big )  
\big| \\
& & 
+  \sum_{i, y_i = -  z_i }\big|\sigma\big(  \frac{2}{\eta} y_i +   s(y)_i\big )  
-\sigma\big(  \frac{2}{\eta}  z_i +  s(z)_i\big )  
\big|.
\EEAS
We can now divide in two cases, whether $y_i=1$ or $+1$, leading to
\BEAS
W( t(\cdot|y) , t(\cdot| z) )
& \leqslant & 
  \sum_{i, y_i = z_i = 1 }\big|\sigma\big(  \frac{2}{\eta} y_i +   s(y)_i\big )  
-\sigma\big(  \frac{2}{\eta}  z_i +    s(z)_i\big )  
\big| \\
  & & + \sum_{i, y_i = z_i = -1 }\big|\sigma\big(  \frac{2}{\eta} y_i +   s(y)_i\big )  
-\sigma\big(  \frac{2}{\eta}  z_i +    s(z)_i\big )  
\big| \\
& & 
+  \sum_{i, y_i = -  z_i = 1 }\big|\sigma\big(  \frac{2}{\eta} y_i +   s(y)_i\big )  
-\sigma\big(  \frac{2}{\eta}  z_i +  s(z)_i\big )  
\big|
\\
& & +  \sum_{i, y_i = -  z_i = -1 }\big|\sigma\big(  \frac{2}{\eta} y_i +   s(y)_i\big )  
-\sigma\big(  \frac{2}{\eta}  z_i +  s(z)_i\big )  
\big| \\
& = & 
  \sum_{i, y_i = z_i = 1 }\big|\sigma\big(  \frac{2}{\eta}   +   s(y)_i\big )  
-\sigma\big(  \frac{2}{\eta}    +    s(z)_i\big )  
\big| \\
  & & + \sum_{i, y_i = z_i = -1 }\big|\sigma\big(  -\frac{2}{\eta}  +   s(y)_i\big )  
-\sigma\big(  -\frac{2}{\eta}   +   s(z)_i\big )  
\big| \\
& & 
+  \sum_{i, y_i = -  z_i = 1 }\big|\sigma\big(  \frac{2}{\eta}  +   s(y)_i\big )  
-\sigma\big(  -\frac{2}{\eta}    +  s(z)_i\big )  
\big|
\\
& & +  \sum_{i, y_i = -  z_i = -1 }\big|\sigma\big(  -\frac{2}{\eta}  +   s(y)_i\big )  
-\sigma\big(  \frac{2}{\eta}   + s(z)_i\big )  
\big|
\EEAS
We can now use the facts that $\sigma(-u)=1-\sigma(u)$, and that for $v,v' \geqslant u$, $\sigma'(v) = \sigma(v) \sigma(-v)  = \frac{1}{2 + e^{-v}+e^v}\leqslant \frac{1}{2+e^{u}}$, and thus, by Taylor's formula, $|\sigma(v)-\sigma(v')| \leqslant  \frac{1}{2+e^{u}} | v - v'| $, to get
\BEAS
W( t(\cdot|y) , t(\cdot| z) )
& \leqslant &    \sum_{y_i = z_i } \frac{1}{2+ \exp\big(   \frac{2}{\eta} - \beta_1  \big)}   	\big|    s(y)_i -   s(z)_i 
\big| \\
& & 
+    \sum_{i } 1_{y_i \neq z_i} \big|  \sigma\big(   \frac{2}{\eta} +    y_i s(y)_i\big )  
-\sigma\big(  -\frac{2}{\eta}  -y_i  s(z)_i\big )  
\big|.
\EEAS
We can now use the monotonicity of $\sigma$ and the bounds  $ \|  s(y)\|_\infty, \| s(z)\|_\infty \leqslant \beta_1$, and the $\beta_2$-Lipschitz-continuity of $s$ (all from \eq{reg}) to get
\BEAS
W( t(\cdot|y) , t(\cdot| z) )
& \leqslant &   \sum_{y_i = z_i } \frac{1}{2+ \exp\big(   \frac{2}{\eta} - \beta_1  \big)}  	2 \beta_2 \ell(y,z) 
  \\
& & 
+    \sum_{i } 1_{y_i \neq z_i} \Big(  \sigma\big(  \frac{2}{\eta} +     \beta_1\big )  
-\sigma\big(  - \frac{2}{\eta}  -   \beta_1 \big )  
\Big)
\\
& \leqslant &    \frac{1}{2+ \exp\big(   \frac{2}{\eta} - \beta_1  \big)} 	2 \beta_2  d \ell(y,z)
 +
\big| 1 - 2 \sigma\big(  - \frac{2}{\eta} -    \beta_1\big )  
\big| \cdot  \ell(y,z)
\\
& \leqslant &  \Big[
\frac{2\beta_2 d}{2+ \exp\big(   \frac{2}{\eta} - \beta_1  \big)} 
+ 1 - \frac{2}{1+\exp\big(   \frac{2}{\eta} + \beta_1  \big) }
\Big] \cdot   \ell(y,z) \\
& \leqslant &
\Big[
  1 - \exp \big(  - \frac{2}{\eta} -   \beta_1\big)
+     2\beta_2 d \exp\big(  -  \frac{2}{\eta} + \beta_1  \big)     
\Big] \cdot  \ell(y,z) .
\EEAS
If  $4\beta_2 d e^{2\beta_1} \leqslant 1$, then
$2\beta_2 d \exp\big(  -  \frac{2}{\eta} + \beta_1  \big)  
\leqslant \frac{1}{2} \exp \big(  - \frac{2}{\eta} -   \beta_1\big)$,
and
we get the desired result:
$$
W( t(\cdot|y) , t(\cdot| z) )
  \leqslant   \Big(  
  1 - \frac{1}{2} \exp \big(  - \frac{2}{\eta} -   \beta_1\big)  \Big)  \ell(y,z) .
$$
\end{proof}

\section{Proof of Proposition~\ref{prop:stationary}}

\label{app:stationary}

\begin{proof}
1For the purpose of the proof, we consider adding on top of the transition kernel $t$ a Metropolis-Hasting (MH) step~\citep[see, e.g.,][]{robert2004metropolis} that keeps the proposal $y'$ given $y$ unchanged with probability
\BEQ
\label{eq:accept}
  \min \Big\{ 1
,  \frac{q(y') t( y | y')}{q(y) t( y' | y)}
\Big\},
\EEQ
and go back to $y$ instead. The stationary distribution associated with this transition kernel is then exactly~$q$. 

 We consider an arbitrary probability distribution $r$.
 
 We consider two coupled samples $y$ from $r$ and $z$ from $q$, so that $W(r,s) = \E [ \ell(y,z) ]$. We also assume that, given $y,z$, the binary vectors $y', z'$ are sampled jointly respectively from $t(\cdot|y)$ and $t(\cdot|z)$, so that the Wasserstein distance given $y,z$ between 
 the distributions $t(\cdot|y)$ and $t(\cdot|z)$ is   $W(t(\cdot|y),t(\cdot|z)) = \E[ \ell(y',z')|y,z]$. We consider $z''$ obtained from $z'$ by a Metropolis-Hasting step; $z''$ is then marginally distributed from $q$, while $y'$ is marginally distributed according to $\sum_{u \in \{-1,1\}^d} r(u) t( \cdot |u )$. Thus, by definition of $W$ as the loss for the optimal coupling, we have:
 \BEAS && W\Big(  \sum_{u \in \{-1,1\}^d} r(u) t( \cdot |u ), q  \Big) \\
 & \!\!\!\leqslant  \!\!\!&  \E [ \ell(y',z'') ] 
 = \E [1_{ {\rm accept}(z,z')}  \ell(z'',y') ] 
 +  \E [1_{ {\rm reject}(z,z')}  \ell(z'',y') ] \\
& \!\!\! =  \!\!\!&  \E [1_{ {\rm accept}(z,z')}  \ell(z',y') ] 
 +  \E [1_{ {\rm reject}(z,z')}  \ell(z,y') ]  \mbox{ by definition of the MH step},\\
 &  \!\!\!\leqslant  \!\!\! & \E [1_{ {\rm accept}(z,z')}  \ell(z',y') ] 
 +  \E [1_{ {\rm reject}(z,z')}  (\ell(z',z) + \ell(z',y')) ] \mbox{ by the triangular inequality,}\\
 & \!\!\! =  \!\!\!&  \E [  \ell(z',y') ]  + 
  \E [1_{ {\rm reject}(z,z')}  \ell(z',z) ] \\
 & \!\!\! \leqslant  \!\!\!&  \Big(  
  1 - \frac{1}{2} \exp \big(  - \frac{2}{\eta} -   \beta_1\big)  \Big) W(r,q)  + 
  \E [1_{ {\rm reject}(z,z')}  \ell(z',z) ],
 \EEAS
 from the convergence result in Proposition~\ref{prop:contractivity}.
We have, by definition of the accept probability in \eq{accept},
\BEA
\label{eq:rej}
\E [1_{ {\rm reject}(z,z')}  \ell(z',z) ] 
& = & \sum_{z,z' \in \{-1,1\}^d} q(z) t(z'|z) \ell(z',z)
\Big(
1 -  \min \Big\{ 1
,  \frac{q(z') t( z | z')}{q(z) t( z' | z)}
\Big\} \Big) \\
\notag & \leqslant & \frac{1}{2}  \sum_{z,z'}   \ell(z',z)
  \big| q(z) t(z'|z) 
-   q(z') t( z | z') 
\big| ,
\EEA
with the transition kernel defined in \eq{t}, that is,
\BEAS
t(z'|z) 
& \propto & \exp\big( ( \frac{1}{2} s (z) + \frac{1}{\eta}z)^\top z' \big) .
\EEAS
In order to prove a convergence result, we have to understand under which condition we obtain an approximate \emph{detailed balance} condition~\citep{levin2017markov}. This will be a consequence of $s$ being small (e.g., here, $\nabla \log q(z)$ small).

We define the two additional transition kernels and distributions
\BEAS
\hat{t}(z'|z) & \propto & \exp\big( (   \frac{1}{\eta}z)^\top z' \big) \\
\hat{q}(z) & \propto & 1,
\EEAS
for which we have the detailed balance condition $\hat{q}(z) \hat{t}(z'|z) 
-   \hat{q}(z') \hat{t}( z | z') = 0$.
We then get, from \eq{rej},
\BEAS
\E [1_{ {\rm reject}(z,z')}  \ell(z',z) ] 
& \leqslant & \frac{1}{2} \sum_{z,z'}   \ell(z',z)
  \big| q(z) - \hat{q}(z) | \cdot \hat{t}(z'|z) 
+\frac{1}{2} 
   \sum_{z,z'}   \ell(z',z)
  {q} (z) \cdot | \hat{t}(z'|z) -    t(z'|z) 
\big| ,
\EEAS using detailed balance for  $\hat{q}$ and $\hat{t}$.
For the left term, we can use the symmetry of $\hat{t}$ and an explicit computation, to get
\BEAS \sum_{z,z'}   \ell(z',z)
  \big| q(z) - \hat{q}(z) | \cdot \hat{t}(z'|z) 
  & = &   \sum_{z}    
  \big| q(z) - \hat{q}(z) | \cdot \sum_{z'}  \hat{t}(z'|1_d)  \ell(z'|1_d) \\
  & = &   \sum_{z}    
  \big| q(z) - \hat{q}(z) | \cdot
  \sum_{i=1}^d  \hat{t}(z'_i = - 1 |1_d)    \\
  & \leqslant & 2 d \sigma(-2/\eta)  
  {\rm TV}(q, \hat{q}) \leqslant 
  2 d \exp(-2/\eta)  
  {\rm TV}(q, \hat{q}),
     \EEAS
     where $ {\rm TV}(q, \hat{q}) = \frac{1}{2}  \sum_{z}    
  \big| q(z) - \hat{q}(z) |$ is the total variation distance\footnote{See \url{https://en.wikipedia.org/wiki/Total_variation_distance_of_probability_measures}.} between $q$ and $\hat{q}$.
 For the second term, we have
\BEAS
 \sum_{z,z'}   \ell(z',z)
  {q} (z) \cdot | \hat{t}(z'|z) -    t(z'|z) 
\big| & \leqslant &   
     \max_{z} \sum_{z'}\ell(z',z)  | \hat{t}(z'|z) -    t(z'|z) 
\big| \\
& \leqslant &   
  d \cdot  \max_{z} \sum_{z'} | \hat{t}(z'|z) -    t(z'|z) 
\big| \mbox{ since } \ell(\cdot,\cdot) \leqslant d,\\
& \leqslant &
 d \cdot  \frac{4d \beta_1 }{2 + e^{2/\eta-\beta_1}}.
 \EEAS
We have used the small lemma for the two probability mass functions on $\{-1,1\}^d$ proportional to $A(y) \propto e^{y^\top a}$ and $B(y) \propto e^{y^\top b}$:
$\sum_{y} | A(y) - B(y)| \leqslant 2 \sum_{i=1}^d |\sigma(2a_i) - \sigma(2b_i)|
 \leqslant 2 \sum_{i=1}^d \frac{2}{2+ \exp(\min\{2a_i,2b_i\})} | a_i - b_i| $.

 Overall, we get
 \BEAS W\Big(  \sum_{u \in \{-1,1\}^d} r(u) t( \cdot |u ), q  \Big)
  & \leqslant &  \Big(  
  1 - \frac{1}{2} \exp \big(  - \frac{2}{\eta} -   \beta_1\big)  \Big) W(r,q)  + 
     d \exp(-2/\eta)  
  {\rm TV}(q, \hat{q})+
  d \cdot  \frac{2d \beta_1 }{ e^{2/\eta-\beta_1}}.
 \EEAS
 Thus, if $q'$ denotes the stationary distribution of the Markov kernel $t$, we get, applying the above inequality to $s=q'$,
 \BEAS W(q', q )
  & \leqslant &  \Big(  
  1 - \frac{1}{2} \exp \big(  - \frac{2}{\eta} -   \beta_1\big)  \Big) W(q',q)  + 
     d \exp(-2/\eta)  
  {\rm TV}(q, \hat{q})+
  d \cdot  \frac{2d \beta_1 }{ e^{2/\eta-\beta_1}},
 \EEAS
 leading to  
 $$
 W(q',q) \leqslant 
   2 d e^{  \beta_1}
  {\rm TV}(q, \hat{q})+
     {4 d^2 \beta_1 } e^{ 2 \beta_1}. $$
We can now use Pinsker's inequality\footnote{See \url{https://en.wikipedia.org/wiki/Pinsker\%27s_inequality}.}, to get:
\BEAS
 {\rm TV}(q, \hat{q})
 &  \leqslant & \Big(
\frac{1}{2} {\rm KL}( q\| \hat{q} ) 
 \Big)^{1/2}
 = 
 \Big(\frac{1}{2} \E_{q(z)}[ \log q(z) - \log \frac{1}{2^d} ]
 \Big)^{1/2} \\
 & \leqslant & 
  \Big(\frac{1}{2} \E_{q(z)}[ \log q(z_0) + \beta_1 \| z - z_0\|_1 - \log \frac{1}{2^d} ]
 \Big)^{1/2} \mbox{ using the boundedness of } s, \\
 & \leqslant & 
  \Big(\frac{1}{2} \E_{q(z)}[   \beta_1 \| z - z_0\|_1   ]
 \Big)^{1/2}  \leqslant \sqrt{ \beta_1 d},
\EEAS
by choosing $z_0$ such that $q(z_0)\leqslant \frac{1}{2^d}$ (there has to be one). We thus get
 the desired result
 $$
 W(q',q) \leqslant 
   2 d e^{  \beta_1}
  \sqrt{ \beta_1 d} +
     {4 d^2 \beta_1 } e^{ 2 \beta_1}
     = 2d \big(
     2 d \beta_1  e^{2\beta_1} + \sqrt{d \beta_1  e^{2\beta_1}}
     \big). $$
\end{proof}

\section{Proof of Proposition~\ref{prop:contractivity-twostage}}

\label{app:contractivity-twostage}

\begin{proof}
We can reuse the proof for the one-stage sampler to obtain the contractivity of the second step of the two-stage sampler (updating the fact that we have no $1/2$ factor) to get a contracting rate
$$
\Big[
  1 - \exp \big(  - \frac{2}{\eta} -   2\beta_1\big)
+     4\beta_2 d \exp\big(  -  \frac{2}{\eta} + 2\beta_1  \big)     
\Big].
$$
We thus need the condition  $8\beta_2 d e^{4\beta_1} \leqslant 1$ to get  the contraction
$$\Big(  
  1 - \frac{1}{2} \exp \big(  - \frac{2}{\eta} -   2 \beta_1\big)\Big).
  $$
We now need to compute the contraction for the first step using simply $\beta_1=\beta_2=0$ in the same reasoning, leading to a contraction
$$\Big(  
  1 -   \exp \big(  - \frac{2}{\eta} \big)\Big).
  $$
  Multiplying the two contractions, we get
  $$
  \Big(  
  1 - \frac{1}{2} \exp \big(  - \frac{2}{\eta} -   2 \beta_1\big)\Big)
  \Big(  
  1 -   \exp \big(  - \frac{2}{\eta} \big)\Big) \leqslant 
  \Big(  
  1 - \frac{1}{2} \exp \big(  - \frac{2}{\eta} -   2 \beta_1\big)\Big),
  $$
  which leads to the desired result.
\end{proof}

\section{Proof of Proposition~\ref{prop:stationary-2}}

\label{app:convergence-twostage}

\begin{proof}

We consider  $y^{(1)} \in \{-1,1\}^d $ distributed from an arbitrary distribution $r$, and $y^{(2)}$ sampled from the distribution $q$. We sample $z^{(1)}$ and $z^{(2)}$ from $u(\cdot | y^{(1)})$ and  $u(\cdot | y^{(2)})$, as well as $\bar{y}^{(1)}$ and $\bar{y}^{(2)}$, from $u(\cdot | z^{(1)})$ and  $u(\cdot | z^{(2)})$ (so that we get a full approximate Gibbs sampling step, with transition kernel $v$, from $y^{(1)}$ to $\bar{y}^{(1)}$ and $y^{(2)}$ to $\bar{y}^{(2)}$), all coupled so that the Wasserstein distance between
$$
\sum_{y^{(1)}} r(y^{(1)}) v(\cdot|y^{(1)})$$
and
$$
\sum_{y^{(2)}} q(y^{(2)}) v( \cdot|y^{(2)})$$
(that is, one step of the Markov transition kernel), is less than $\E[ \ell(\bar{y}^{(1)},\bar{y}^{(2)})]$.

For the purpose of the proof, we can now add a Metropolis Hasting step to the second chain (which leads to $\bar{\bar{y}}^{(2)}$) so that, since it starts from the stationary distribution $q$ of the full Gibbs sampling step, it remains at $q$. Thus, like in Appendix~\ref{app:stationary},
\BEAS
W\Big(
\sum_{y^{(1)}} r(y^{(1)}) v(\cdot|y^{(1)}), q \Big)
& \leqslant & \E[ \ell(\bar{y}^{(1)},\bar{\bar{y}}^{(2)})] \\
& \leqslant & 
\E[ \ell(\bar{y}^{(1)},{\bar{y}}^{(2)})]
+ \E[ \ell(\bar{y}^{(2)},\bar{\bar{y}}^{(2)})]
\\
& \leqslant & 
\Big(  
  1 - \frac{1}{2} \exp \big(  - \frac{2}{\eta} -   2\beta_1\big)\Big) W(r,q)
+ \E[1_{\rm reject}({y}^{(2)},\bar{y}^{(2)}) \ell(\bar{y}^{(2)},y^{(2)})].
\EEAS
Moreover, like in Appendix~\ref{app:stationary}, we have, now dropping the superscripts $^{(2)}$:
\BEAS\E[1_{\rm reject}(y,\bar{y}) \ell(\bar{y},y)]
& = & \frac{1}{2} 
\sum_{y, \bar{y}}
\ell(\bar{y},y) \big| q(y) v(\bar{y}|y) - q(\bar{y}) v(y| \bar{y}) \big|.
\EEAS
We have, using that $q(z|y) = u(z|y)$, and by definition of $v$,
$$
q(y) v(\bar{y}|y) = \sum_{z} q(y)  u(\bar{y}|z)u(z|y)  = \sum_{z} q(y) u(\bar{y}|z) q(z|y) 
 = \sum_{z} q(z) q(y|z) u(\bar{y}|z),
$$
because $q(y) q(z|y)=q(y,z) = q(z) q(y|z)$,
leading to:
\BEA
\E[1_{\rm reject}(y,\bar{y}) \ell(\bar{y},y)]
\notag & = & \frac{1}{2} 
\sum_{y, \bar{y}}
\ell(\bar{y},y) \bigg| \sum_{z} q(z) q(y|z) u(\bar{y}|z)- \sum_{z} q(z) q(\bar{y}|z) u(y|z) \bigg|
\\
\notag & = & \frac{1}{2} 
\sum_{y, \bar{y}}
\ell(\bar{y},y) \bigg| \sum_{z} q(z)
\big( q(y|z) u(\bar{y}|z)-  q(\bar{y}|z) u(y|z) \big) \bigg|
\\
\notag &\leqslant & \frac{1}{2}
\sum_{z,y,\bar{y}}\ell(\bar{y},y) q(z) 
\big| q(y|z) u(\bar{y}|z) 
- u(y|z) q(\bar{y}|z)\big|
\mbox{ by the triangular inequality,}
\\
\notag &\leqslant & \frac{1}{2}
\max_{z} \sum_{y,\bar{y}}\ell(\bar{y},y) 
| q(y|z) u(\bar{y}|z) 
- u(y|z) q(\bar{y}|z)|
\mbox{ by bounding the expectation by the max},
\\
\notag &= & \frac{1}{2}
\max_{z} \sum_{y,\bar{y}}\ell(\bar{y},y) 
| q(y|z) u(\bar{y}|z) - u(y|z) u(\bar{y}|z) + u(y|z) u(\bar{y}|z) 
- u(y|z) q(\bar{y}|z)|
\\
\notag &\leqslant & \frac{1}{2}
\max_{z} \sum_{y,\bar{y}}\ell(\bar{y},y) 
\Big\{
u(\bar{y}|z) \big| q(y|z)- u(y|z)  \big|   + u(y|z) \big| u(\bar{y}|z) 
-   q(\bar{y}|z)\big| \Big\}
\\
\notag &= & 
\max_{z} \sum_{y,\bar{y}}\ell(\bar{y},y) 
  u(y|z)|  q(\bar{y}|z) - u(\bar{y}|z)| \mbox{ by symmetry,}
\\
\notag & \leqslant & 
\max_{z} \sum_{y,\bar{y}} \big[\ell(\bar{y},z) + \ell(z,y) \big] 
  u(y|z)|  q(\bar{y}|z) - u(\bar{y}|z)| \mbox{ by the triangular inequality,}
\\
\notag & \leqslant & 
\max_{z} \bigg\{ \sum_{y,\bar{y}}  \ell(\bar{y},z)  
  u(y|z)|  q(\bar{y}|z) - u(\bar{y}|z)| 
  + \sum_{y,\bar{y}}    \ell(z,y)  
  u(y|z)|  q(\bar{y}|z) - u(\bar{y}|z)| \bigg\}
\\
\notag & & \hspace*{9cm} \mbox{ by separating the sum,}\\
\notag & = & 
\max_{z} \bigg\{ \sum_{\bar{y}}  \ell(\bar{y},z)  
  |  q(\bar{y}|z) - u(\bar{y}|z)| 
  + \sum_{y}  \ell(z,y)  
  u(y|z)
  \sum_{\bar{y}} 
  |  q(\bar{y}|z) - u(\bar{y}|z)| \bigg\}
  \\
 \notag  && \hspace*{7cm} \mbox{by summing out } y \mbox{ in the first term}, \\
 \notag & = & 
\max_{z} \sum_{\bar{y}}\Big[ \ell(\bar{y},z) + d  \sigma ( -\frac{2}{\eta} +2  \beta_1  ) \Big]
\cdot  |  q(\bar{y}|z) - u(\bar{y}|z)|, \\
 \label{eq:AAA}& \leqslant & 
\max_{z} \sum_{\bar{y}} \ell(\bar{y},z) 
\cdot  |  q(\bar{y}|z) - u(\bar{y}|z)|
+ 2d  \sigma ( -\frac{2}{\eta} +2  \beta_1  )  \max_{z}       {\rm TV} ( q(\cdot|z) , u(\cdot|z)),
\EEA
using that
$\displaystyle \sum_{y}  \ell(z,y)  
  u(y|z)= \sum_{i=1}^d \P_{u(y_i|z_i)}(y_i \neq z_i| z_i)
  =  \sum_{i=1}^d \sigma(-2/\eta - 2s(z)_i) \leqslant d\sigma(-2/\eta + 2\beta_1) $ and  that $u(\cdot|z)$ has independent components.

We can now write, because of our assumption in \eq{CCA},
$$
q(y|z) = u(y|z) \frac{1}{Z(z)} e^{
\varphi(y,z)},$$
with $\varphi(y,z)=\log q(y) - \log q(z) - s(z)^\top(y-z)$, which satisfies
  $$0 \leqslant \varphi(y,z) \leqslant  \frac{\beta_2}{2} \| y  - z\|_1^2$$
  and \BEA
  \notag \log Z(z) & = &  \log \sum_{y} u(y|z) e^{\varphi(y,z)} 
  \leqslant \log \sum_{y} u(y|z) e^{ \frac{\beta_2}{2} \| y  - z\|_1^2} \\
 \notag   & \leqslant & 
   \log \sum_{y} u(y|z) e^{ {d \beta_2} \| y  - z\|_1}
   = \sum_{i=1}^d \log \sum_{y_i} u(y_i|z_i) 
   e^{ {d \beta_2} | y_i  - z_1|}
   \mbox{ using that } \| y - z\|_1 \leqslant 2d, \\
 \notag    & \leqslant & 
   \sum_{i=1}^d \log 
   \Big( \sigma(2/\eta + 2s(z)_i)
   + \sigma(- 2/\eta - 2s(z)_i)
   e^{ {2 d \beta_2}}
   \Big) \mbox{ by definition of } u(\cdot|z),\\
  \notag    & = & 
   \sum_{i=1}^d \log 
   \Big( 1
   + \sigma(- 2/\eta - 2s(z)_i)
   (e^{ {2 d \beta_2}} - 1)
   \Big)
   \leqslant  d \log
   \Big( 1
   + \sigma(- 2/\eta + 2\beta_1)
   (e^{ {2 d \beta_2}} - 1)
   \Big)\\
 \label{eq:D}  &  \leqslant &   d \log
   \Big( 1
   + e^{ - 2/\eta + 2\beta_1}
   (e^{ {2 d \beta_2}} - 1)
   \Big) \leqslant d e^{ - 2/\eta + 2\beta_1}
   (e^{ {2 d \beta_2}} - 1) \mbox{ using } \log ( 1+ c) \leqslant c.
  \EEA
Moreover, we have $Z(z) \geqslant 1$.
 
We treat the two terms in \eq{AAA} separately. For the second term, we have, using Pinsker's inequality,
\BEAS
{\rm TV} ( q(\cdot|z) , u(\cdot|z))
& \leqslant & \Big( \frac{1}{2} 
{\rm KL}(    u(\cdot|z) \| q(\cdot|z)) \Big)^{1/2}
= \Big( \frac{1}{2} 
\E_{ u(y|z)}   \log \frac{u(y|z)}{  q(y|z)} \Big)^{1/2} \\
& \leqslant & 
 \Big( \frac{1}{2} 
\E_{ u(y|z)}  \log Z(z) - \varphi (y,z)  \Big)^{1/2}
\leqslant \Big( \frac{1}{2} 
\  \log Z(z)  \Big)^{1/2},
\EEAS
because $\varphi \geqslant 0$.
Thus the second term in \eq{AAA} can be bounded as follows, using \eq{D}:
\BEA
\notag 2d  \sigma ( -\frac{2}{\eta} +2  \beta_1  )  \max_{z}       {\rm TV} ( q(\cdot|z) , u(\cdot|z))
& \leqslant & 
2d  e^{  -\frac{2}{\eta} +2  \beta_1}  \Big(
 \frac{d}{2} e^{ - 2/\eta + 2\beta_1}
   (e^{ {2 d \beta_2}} - 1)
\Big)^{1/2}
\\
\label{eq:Z} & = & 
\sqrt{2}d^{3/2}  e^{  -\frac{3}{\eta} +3  \beta_1}  \big(
     e^{ {2 d \beta_2}} - 1 
\big)^{1/2}.
\EEA

For the first term in \eq{AAA}, we have:
\BEAS
\sum_{\bar{y}} \ell(\bar{y},z) 
\cdot  \big|  q(\bar{y}|z) - u(\bar{y}|z)\big|
& = & 
\sum_{\bar{y}} \sum_{i=1}^d 1_{\bar{y}_i \neq z_i}
u(\bar{y}|z)
\cdot  \Big|  \frac{1}{Z(z)} e^{
\varphi(\bar{y},z)}- 1 \Big|
\mbox{ by definition of } \varphi,\\
& = & 
\sum_{\bar{y}} \sum_{i=1}^d 1_{\bar{y}_i \neq z_i} \sigma( -2/\eta - 2s(z)_i)
\prod_{j \neq i} u(\bar{y}_j|z_j)
\cdot  \Big|  \frac{1}{Z(z)} e^{
\varphi(\bar{y},z)}- 1 \Big|
\mbox{ by definition of } u.
\EEAS
We now use the inequality
\BEAS
\Big|  \frac{1}{Z(z)} e^{
\varphi(\bar{y},z)}- 1 \Big|
& = & 
\Big(  \frac{1}{Z(z)} e^{
\varphi(\bar{y},z)}- 1 \Big)_+ + \Big(1-   e^{
\varphi(\bar{y},z) - \log Z(z) } \Big)_+ \\
& \leqslant & 
\Big( e^{
\varphi(\bar{y},z)}- 1 \Big)_+ + \Big(\log Z(z) -  
\varphi(\bar{y},z) \Big)_+
\leqslant e^{
\varphi(\bar{y},z)}- 1 + \log Z(z),
\EEAS
which is the result of $\varphi \geqslant 0$ and $Z \geqslant 1$, to get, using $\varphi(\bar{y},z)\leqslant d \beta_2 \| \bar{y}-z\|_1$,
\BEAS
\sum_{\bar{y}} \ell(\bar{y},z) 
\cdot  \big|  q(\bar{y}|z) - u(\bar{y}|z)\big|
& \leqslant & 
\sum_{\bar{y}} \sum_{i=1}^d 1_{\bar{y}_i \neq z_i} \sigma( -2/\eta - 2s(z)_i)
\prod_{j \neq i} u(\bar{y}_j|z_j)
\cdot  \Big|  e^{
d \beta_2\| \bar{y} - z \|_1 } - 1 + \log Z(z) \Big|
\\
& \leqslant & 
\sum_{i=1}^d \sum_{\bar{y}_j, j \neq i} 
  \sigma( -2/\eta + 2\beta_1)
\prod_{j \neq i} u(\bar{y}_j|z_j)
\cdot  \Big(  e^{ 2d \beta_2} e^{
d \beta_2\sum_{j \neq i} |\bar{y}_j - z_j| } - 1  + \log Z(z)\Big)
\\
& = &  \sigma( -2/\eta + 2\beta_1)
\sum_{i=1}^d \Big(
 e^{ 2d \beta_2}
 \prod_{j \neq i}
  \Big\{
 \sigma(2/\eta + 2s(z)_j)
 +  \sigma(-2/\eta - 2s(z)_j)  e^{ 2d \beta_2} \Big\}
- 1 + \log Z(z) \Big)
\\
& = &  \sigma( -2/\eta + 2\beta_1)
\sum_{i=1}^d \Big(
 e^{ 2d \beta_2}
 \prod_{j \neq i}
 \Big\{ 1 
 +  \sigma(-2/\eta + 2s(z)_j)  (e^{ 2d \beta_2} -1) \Big\}
- 1 + \log Z(z)\Big)
\\
& \leqslant &  e^{ -2/\eta + 2\beta_1}
\sum_{i=1}^d \Big(
 e^{ 2d \beta_2}
 \prod_{j \neq i}
 \Big\{ 1 
 +  \sigma(-2/\eta +2 \beta_1)  (e^{ 2d \beta_2} -1) \Big\}
- 1+ \log Z(z) \Big)
\\
& = & d e^{ -2/\eta + 2\beta_1}
  \Big(
 e^{ 2d \beta_2}
 \Big( 1 
 +  \sigma(-2/\eta +2 \beta_1)  (e^{ 2d \beta_2} -1) \Big)^{d-1}
- 1 + \log Z(z)\Big).
\EEAS
From the constraint $8 d \beta_2 e^{4 \beta_1} \leqslant 1$, we have $\beta_2 d \leqslant \frac{1}{8}$, which implies that 
$e^{ 2d \beta_2} -1 \leqslant \frac{5}{2} d\beta_2$. Moreover, we assume that
$ e^{-2/\eta + 2 \beta_1} \leqslant \frac{1}{d}$. This leads to, using \eq{D},
\BEA
\notag \sum_{\bar{y}} \ell(\bar{y},z) 
\cdot  \big|  q(\bar{y}|z) - u(\bar{y}|z)\big|
& \leqslant & 
d e^{ -2/\eta + 2\beta_1}
  \Big(
 e^{ 2d \beta_2}
 \Big( 1 
 +  \frac{1}{d} \frac{5}{2} d\beta_2 \Big)^{d}
- 1 + d e^{ - 2/\eta + 2\beta_1}
   (e^{ {2 d \beta_2}} - 1) \Big)
\\
\notag & \leqslant & 
d e^{ -2/\eta + 2\beta_1}
  \Big(
 e^{ 2d \beta_2}
 e^{ \frac{5}{2}d \beta_2}
- 1 + d e^{ - 2/\eta + 2\beta_1} \frac{5}{2} d \beta_2 \Big) \\
\label{eq:Zp} &
\leqslant &
d e^{ -2/\eta + 2\beta_1}
  6d \beta_2 + \frac{5}{2} d^3 \beta_2 e^{ -4/\eta + 4\beta_1}
  \EEA
  using $1+c \leqslant e^c$.

Thus, assembling the terms in \eq{Z} and \eq{Zp},
\BEAS
\E[1_{\rm reject}(y,\bar{y}) \ell(\bar{y},y)]
& \leqslant & 
d e^{ -2/\eta + 2\beta_1}
  6d \beta_2
  + \frac{5}{2} d^3 \beta_2 e^{ -4/\eta + 4\beta_1}
  + \sqrt{2}d^{3/2}  e^{  -\frac{3}{\eta} +3  \beta_1}  \big(
     \frac{5}{2} d \beta_2
\big)^{1/2}.
\EEAS
Thus, using the same reasoning as in Appendix~\ref{app:stationary}, and using that $\beta_2 d \leqslant \frac{1}{8}$,
\BEA
\notag W(q',q) & \leqslant & 2 e^{\frac{2}{\eta} +   2\beta_1} 
\E[1_{\rm reject}(y,\bar{y}) \ell(\bar{y},y)] \\
\notag & \leqslant & 
2d e^{4\beta_1}
  6d \beta_2
  + 5 d^3 \beta_2 e^{ -2/\eta + 6\beta_1}
  + 2\sqrt{2}d^{3/2}  e^{  -\frac{1}{\eta} +5  \beta_1}  \big(
     \frac{5}{2} d \beta_2
\big)^{1/2}
\\
\label{eq:Zpp} & \leqslant & 
17d^2  e^{4\beta_1}
    \beta_2
  + 2 \sqrt{5} d^{2}  e^{  -\frac{1}{\eta} +   5\beta_1} 
   \sqrt{ \beta_2} \mbox{ using } e^{-2/\eta + 2 \beta_1} \leqslant \frac{1}{d}, \\
 \notag   & = & 
17d^2  e^{4\beta_1}
    \beta_2
  + 2 \sqrt{5} d^{2}  e^{  -\frac{1}{\eta} +   \beta_1} e^{4 \beta_1}
   \sqrt{ \beta_2}
\\
 \notag   & \leqslant & 
17d^2  e^{4\beta_1}
    \beta_2
  + 2 \sqrt{5} d^{2}  \frac{1}{\sqrt{d}} e^{4 \beta_1}
   \sqrt{ \beta_2}
   =  d e^{4\beta_1}
   [ 17 d \beta_2 + \sqrt{ 20 d\beta_2}]
   \leqslant  d e^{4\beta_1}
   [ 17 \frac{1}{\sqrt{8}} \sqrt{d \beta_2} +  \sqrt{ 20 d \beta_2}] 
   \\
 \notag   & \leqslant & 12 d e^{4 \beta_1} \sqrt{ d \beta_2}.
\EEA
Overall, we get a bound in $d$ times $\sqrt{\beta_2 d^2}$ and $d$ times $\beta_2 d^2$ if $\eta$ is small enough, by \eq{Zpp}.

\end{proof}
\section{Mixtures of independent variables}
\label{app:mixtures}

In this section, we provide details on score functions for mixtures of two independent variables. We start with a few facts about independent variables.

\paragraph{A few facts about independent variables.}
If $p(x) \propto e^{\beta^\top x}$, then $$p(x) = \frac{e^{\beta^\top x}}{\prod_{i=1}^d 2 \cosh \beta_i}$$
and
$$
\sum_{x \in \{-1,1\}^d} p(x) x = \tanh(\beta x)
$$
(taken componentwise).

If $q(y) \propto \sum_{x \in \{-1,1\}^d} p(x) e^{\alpha y^\top x}$, then $y = x \circ z$, where $p(z) \propto e^{\alpha 1_n^\top x}$ is independent from $x$. Since for independent variables, the first moment characterizes the distributions, we have
$$
q(y) \propto e^{\gamma^\top x}
$$
with $\tanh \gamma_i = \tanh \alpha \cdot \tanh \beta_i$. We then have two different formulas for $q(y)$:
\BEAS
q(y) & = & \frac{e^{\gamma^\top x}}{\prod_{i=1}^d 2 \cosh \gamma_i} \\
& = & \frac{1}{( 2 \cosh \alpha)^d}
\frac{\prod_{i=1}^d \cosh ( \beta_i + \alpha y_i) }{\prod_{i=1}^d   \cosh \beta_i},
\EEAS
the second being obtained by computing the sum with respect to $x$. Note that these two formulas are equal for $y \in \{-1,1\}^d$, not for generic $y$'s. Moreover, one can check that 
 $\alpha =0 $ or $\beta=0$ lead to a constant $q$.

We can compute $\E[x|y]$ as
$$
\E[x|y] = \frac{1}{\alpha} \nabla \log q(x).
$$
Using the first formula for $q$ leads to 
$\E[X|Y=y] = \frac{\gamma}{\alpha}$, which is incorrect. With the second formula, we get:
$$
\frac{1}{\alpha}\nabla \log q(y) =
\tanh ( \beta + \alpha y).
$$

\paragraph{Mixtures of two independent variables.}
\label{app:mixtures]}
We consider
$$
p(x) = \frac{1}{(2 \cosh \beta)^d}
\Big[ \frac{1}{2} e^{\beta 1_n^\top x}  +
\frac{1}{2} e^{-\beta 1_n^\top x} \Big].
$$
With $\tanh \gamma = \tanh \alpha \cdot \tanh \beta$, then $q$ is a mixture of 
$ \frac{1}{(2 \cosh \gamma)^d} e^{ \gamma 1_n^\top y} $ and $ \frac{1}{(2 \cosh \gamma)^d} e^{ -\gamma 1_n^\top y} $, with 
$$
q(y) =  \frac{1}{( 2 \cosh \alpha)^d
 (\cosh \beta)^d }
 \bigg[\frac{1}{2} 
 \prod_{i=1}^d \cosh ( \beta + \alpha y_i)
+
\frac{1}{2} 
 \prod_{i=1}^d \cosh ( \beta - \alpha y_i) \bigg]
$$
and
\BEAS
\frac{1}{\alpha} \nabla \log q(y)
& = & \frac{ \prod_{i=1}^d \cosh ( \beta + \alpha y_i) \cdot \tanh ( \beta + \alpha y)
-
 \prod_{i=1}^d \cosh ( \beta - \alpha y_i) \cdot 
 \tanh ( \beta - \alpha y) }{ \prod_{i=1}^d \cosh ( \beta + \alpha y_i)
+
 \prod_{i=1}^d \cosh ( \beta - \alpha y_i)}
\\
& = & \frac{ e^{\gamma 1_n^\top x}  \tanh ( \beta + \alpha y)
-
  e^{-\gamma 1_n ^\top x} 
 \tanh ( \beta - \alpha y) }{
  e^{\gamma 1_n^\top x}   +  e^{-\gamma 1_n^\top x}  }.
\EEAS
Above, the first formula is valid for all $y\in \rb^d$, while the second formula is only true for $y \in \{-1,1\}^d$.


\end{document}